\documentclass[runningheads]{llncs}

\usepackage[
  paper=a4paper,
  margin=1.2in 
]{geometry}

\makeatletter
\@twosidefalse
\@mparswitchfalse
\makeatother

\usepackage{eccv}

\usepackage{caption}

\usepackage{colortbl}

\definecolor{geoPrimary}{RGB}{0, 113, 178}    
\definecolor{geoDark}{RGB}{0, 60, 60}           
\definecolor{geoLight}{RGB}{0, 158, 116}    
\definecolor{geoAccent}{RGB}{34, 139, 34}      
\definecolor{linearColor}{RGB}{70, 110, 180}  
\definecolor{paperWhite}{RGB}{250, 250, 250}  
\definecolor{geoPurple}{RGB}{112, 48, 160}
\usepackage{fontawesome5}
\usepackage[framemethod=TikZ]{mdframed}
\newcounter{takeawaycounter}
\newenvironment{takeaway}{
  \stepcounter{takeawaycounter}
  \renewcommand{\thetakeawaycounter}{\Roman{takeawaycounter}}
  \begin{mdframed}[
    middlelinecolor   = geoLight, 
    middlelinewidth   = 2pt,      
    backgroundcolor   = geoLight!3, 
    roundcorner       = 4pt,
    innertopmargin   = 6pt,      
    innerbottommargin = 6pt,
    skipabove=10pt,            
    skipbelow=10pt,            
  ]
  \textbf{\textcolor{geoLight}{\faLightbulb\hspace{8pt}Takeaway~\thetakeawaycounter:}} 
}{
  \end{mdframed}
}

\usepackage{xcolor}
\usepackage{colortbl}
\usepackage{textgreek}
\usepackage{multirow}
\usepackage{booktabs}

\definecolor{colorFedAvg}{HTML}{F0E442}
\definecolor{colorFedDyn}{HTML}{E69F00}
\definecolor{colorFedProx}{HTML}{D55E00}
\definecolor{colorScaffold}{HTML}{882255}
\definecolor{colorFederatedFactoryCentralized}{HTML}{56B4E9}
\definecolor{colorFederatedFactoryDecentralized}{HTML}{0072B2}
\definecolor{colorUpperBound}{HTML}{009E73}

\newenvironment{hypothesis}{
  \begin{mdframed}[
    middlelinecolor   = geoPrimary, 
    middlelinewidth   = 2pt,      
    backgroundcolor   = geoPrimary!3, 
    roundcorner       = 4pt,
    innertopmargin   = 6pt,      
    innerbottommargin = 6pt,
    skipabove=10pt,            
    skipbelow=10pt,            
  ]
  \textbf{\textcolor{geoPrimary}{\faBook\hspace{8pt}Hypothesis}:}
}{
  \end{mdframed}
}

\usepackage{eccvabbrv}

\usepackage{graphicx}
\usepackage{booktabs}

\usepackage[accsupp]{axessibility}  

\usepackage{xcolor}
\usepackage{graphicx}
\usepackage{subcaption}
\usepackage{caption}      
\usepackage{tcolorbox}

\usepackage{booktabs}     
\usepackage{multirow}     

\usepackage{booktabs}
\usepackage{tabularx}
\usepackage{ragged2e} 

\usepackage[labelfont=bf, font=small]{caption}

\usepackage{hyperref}

\usepackage{orcidlink}

\usepackage[utf8]{inputenc}
\usepackage[T1]{fontenc}
\usepackage{graphicx}
\usepackage{xcolor}
\usepackage{subcaption}

\usepackage{geometry}
\usepackage{subcaption}
\usepackage[labelfont=bf, font=small]{caption}
\usepackage{booktabs} 

\usepackage{tikz}
\usepackage{pgfplots}
\pgfplotsset{compat=1.18}

\usetikzlibrary{
    arrows.meta,
    positioning,
    calc,
    shapes.geometric,
    shapes.misc,
    backgrounds,
    fit,
    patterns,
    shadows,
    fadings
}

\usepackage{amssymb}

\definecolor{acadblue}{RGB}{0, 113, 178} 
\definecolor{acadred}{RGB}{136, 34, 85} 
\definecolor{acadgreen}{RGB}{0, 158, 116} 
\definecolor{acadpurple}{RGB}{129, 114, 179} 
\definecolor{acadgray}{RGB}{105, 105, 105}

\definecolor{nips_red}{RGB}{214, 60, 60}
\definecolor{nips_blue}{RGB}{60, 120, 214}
\definecolor{nips_green}{RGB}{60, 160, 80}
\definecolor{nips_purple}{RGB}{140, 60, 180}
\definecolor{nips_dark}{RGB}{60, 60, 65}
\definecolor{nips_bg}{RGB}{245, 246, 250}

\newcommand{\legendbox}[2]{%
    \begingroup
    \setlength{\fboxsep}{1.5pt}
    \smash{\colorbox{#1}{\strut #2}}%
    \endgroup
}

\makeatletter
\newcommand{\printfnsymbol}[1]{%
  \textsuperscript{\@fnsymbol{#1}}%
}
\makeatother

\usepackage{algorithm}
\usepackage{algpseudocode}

\usepackage{amsmath}
\usepackage{amssymb}

\begin{document}

\title{FederatedFactory: Generative One-Shot Learning for Extremely Non-IID Distributed Scenarios}

\titlerunning{FederatedFactory}

\author{Andrea Moleri\inst{1, 2}\textsuperscript{*}\and
Christian Internò\inst{3}\textsuperscript{*} \and Ali Raza\inst{1} \and Markus Olhofer\inst{1} \and \\
David Klindt\inst{4} \and Fabio Stella\inst{2}\textsuperscript{\dag} \and 
Barbara Hammer\inst{3}\textsuperscript{\dag}}

\authorrunning{A.~Moleri et al.}

\institute{Honda Research Institute Europe, Germany \and University of Milan-Bicocca, Italy \and Bielefeld University, Germany \and Cold Spring Harbor Laboratory, U.S.}

\maketitle

\begingroup
  \renewcommand{\thefootnote}{\relax} 
  \footnotetext{\textsc{Correspondence:} \email{a.moleri@campus.unimib.it; christian.interno@uni-bielefeld.de}}
\footnotetext{\textsc{Code}: https://github.com/andreamoleri/FederatedFactory}

\endgroup

\vspace{-1.8em} 
\begin{center}
  \small{\textsuperscript{*} \textit{Equal contribution.} \qquad \textsuperscript{\dag} \textit{Co-advised.}}
\end{center}
\begin{abstract}
Federated Learning (\textsc{FL}) enables distributed optimization without compromising data sovereignty. Yet, where local label distributions are mutually exclusive, standard weight aggregation fails due to conflicting optimization trajectories. Often, FL methods rely on pretrained foundation models, introducing unrealistic assumptions. We introduce \textsc{FederatedFactory}, a zero-dependency framework that inverts the unit of federation from discriminative parameters to generative priors. By exchanging generative modules in a single communication round, our architecture supports \textit{ex nihilo} synthesis of universally class-balanced datasets, eliminating gradient conflict and external prior bias entirely. Evaluations across diverse medical imagery benchmarks, including \textsc{MedMNIST} and \textsc{ISIC2019}, demonstrate that our approach recovers centralized upper-bound performance. Under pathological heterogeneity, it lifts baseline accuracy from a collapsed $11.36\%$ to $90.57\%$ on \textsc{CIFAR-10} and restores \textsc{ISIC2019} AUROC to $90.57\%$. Additionally, this framework facilitates exact modular unlearning through the deterministic deletion of specific generative modules.
  \keywords{Federated Learning \and Generative Synthesis \and Non-IID Data}
\end{abstract}


\section{Introduction}
\label{sec:introduction}
FL provides a decentralized framework to optimize statistical models across $K$ distinct clients while strictly preserving local data sovereignty \cite{McMahan2017FedAvg}. However, the theoretical convergence guarantees of traditional \textsc{FL} rely on independently and identically distributed (IID) data, an assumption violated by extreme statistical heterogeneity in real-world applications like multi-institutional medical imaging \cite{Shelleretal2020, Riekeetal2020}. The structural fragility of standard parameter aggregation is exposed most aggressively under pathological label skew \cite{Hsu2019Dirichlet, JamaliRadetal2021}, where each client holds samples from only a few, or even just one, class of data. Specifically, we define and analyze the extreme \emph{single-class silo} regime, where each client $k \in \{1, \dots, K\}$ possesses a local dataset $\mathcal{D}_k$ that contains exclusively one class, yielding mutually disjoint label supports $\mathcal{Y}_k$. Optimizing shared discriminative parameters $\mathbf{w}$ fails unconditionally in this setting. Lacking the counterfactual data necessary to form an inter-class decision boundary, the local empirical risks $\mathcal{L}_k(\mathbf{w})$ produces optimization trajectories that interfere with one another.

To avoid communication bottlenecks and conflicts in gradient trajectories of iterative \textsc{FL}, One-Shot Federated Learning (\textsc{OSFL}) attempts to aggregate knowledge in a single round of communication \cite{Guha2019OneShot, Li2021Practical}. Recent generative \textsc{OSFL} methods synthesize datasets using pretrained Foundation Models (\textsc{FM}s) as universal priors \cite{Beitollahi2024Parametric, Zaland2025OneShot}. While effective for general domains, this dependency is unreliable for specialized applications such as medical diagnosis~\cite{Zhang2024Challenges, Ma2024Segment, Huix2024Are}. By projecting a rare target distribution onto an external representation space, these methods effectively discard the off-manifold feature components $\mathbf{x}_{\perp}$ that constitute the true diagnostic signal \cite{He2026Ai}.

We introduce \textsc{FederatedFactory}, a framework that aims to recover the centralized upper-bound performance under the extreme \textit{single-class silo} assumption via a novel, strictly zero-dependency Generative \textsc{OSFL} architecture. We invert the unit of federation from discriminative parameter matrices $\mathbf{W}$ to localized generative prior parameters $\boldsymbol{\theta}_k$. Each client independently trains and transmits a generative Factory $G_{\boldsymbol{\theta}_k}$ exactly once, supporting both a centralized architecture (in consortiums where a central aggregator can be trusted) and a fully decentralized Peer-to-Peer (P2P) network mesh (in consortiums in which a central aggregator cannot be trusted). Specifically to our work, we use the computationally efficient EDM2 diffusion model \cite{Karras2022EDM, Karras2024EDM2}.

The server concatenates these decoupled generative structures into a universal prior, synthesizing class-balanced datasets \emph{ex nihilo} from a standard latent space $\mathcal{Z}$. By relying exclusively on models trained directly on the true localized data distributions, \textsc{FederatedFactory} explicitly avoids projection errors and eliminates the external prior bias inherent to \textsc{FM}-reliant methods.

\begin{hypothesis}
We hypothesize that shifting the unit of federation from discriminative parameters to generative priors enables \textsc{OSFL} to achieve performance comparable to centralized baselines in pathologically non-IID scenarios, strictly without raw data exchange or reliance on external \textsc{FM}s.
\end{hypothesis}
 We summarize our contributions as follows:
\begin{itemize}
    \item \textbf{Robustness to Extreme Heterogeneity:} \textsc{FederatedFactory} recovers centralized performance under pathological single-class silos where standard methods collapse (e.g, \textsc{CIFAR10} \cite{Krizhevsky2009CIFAR} Accuracy $11.36\% \rightarrow 90.57\%$, and \textsc{ISIC2019} \cite{Terrail2022FLamby} AUROC $47.31\% \rightarrow 90.57\%$).
    \item \textbf{Zero-Dependency Federation:} By decoupling data synthesis from external pre-trained \textsc{FM}s, our protocol relies exclusively on localized priors. As formalized in \cref{theorem:convergence}, this bounds the global risk strictly by the local generative error ($\bar{\epsilon}$).
    \item \textbf{One-Shot Communication Efficiency:} \textsc{FederatedFactory} relies on a single communication round ($\mathcal{C}_{\mathrm{rounds}} = 1$), avoiding expensive iterations.
    \item \textbf{Modular Machine Unlearning:} The framework guarantees exact modular unlearning~\cite{Geirhos2025Towards}. Removing a client requires only the structural deletion of their corresponding parameter coordinates ($\boldsymbol{\Gamma}_{:, k} \leftarrow \emptyset$).
\end{itemize}

\section{Related Work}
\label{sec:related_work}



\textbf{Optimization Under Distribution Shift.} Federated methods (e.g., \textsc{FedDyn}~\cite{Acar2021FedDyn}, \textsc{FedProx}~\cite{Li2020FedProx}, \textsc{SCAFFOLD}~\cite{Karimireddy2020SCAFFOLD}) handle statistical heterogeneity by bounding local updates. However, they fail when local label sets are entirely disjoint ($\mathcal{Y}_i \cap \mathcal{Y}_j = \emptyset$). Despite strong collaboration incentives in this extreme regime (e.g., isolated hospitals needing generalized models), disjoint labels cause actively divergent gradients \cite{Zhao2018NonIID}. Without overlapping classes to anchor a shared feature space, proximal constraints cannot align these conflicting trajectories to form coherent inter-class decision boundaries \cite{Li2020FedProx}.

\textbf{One-Shot FL and External Priors.} \textsc{OSFL} bypasses iterative gradient conflicts via a single upstream transmission. Recent diffusion frameworks (e.g., \textsc{FedLMG}~\cite{Yang2024FedLMG}, \textsc{FedSDE}~\cite{Qiu2025FedSDE}, \textsc{FedDEO}~\cite{Yang2024FedDEO}) synthesize datasets centrally but rely heavily on pretrained \textsc{FM}s (e.g., CLIP~\cite{Radford2021LearningTV}, Stable Diffusion~\cite{Blattmann2022Retrieval}). Relying on an external manifold $\mathcal{M}_{\mathrm{FM}}$ inherently projects away rare features $\mathbf{x}_{\perp}$; in medical imaging, this risks texture bias and semantic hallucination~\cite{Geirhos2018ImagenetTrained}. We eliminate this \textsc{FM} dependency by transmitting locally trained generative priors $\boldsymbol{\theta}_k$, achieving \textit{ex nihilo} one-shot synthesis exclusively from true local distributions.
Other approaches collaboratively train generative models to construct synthetic datasets. Notably, the Diffusion Federated Dataset~\cite{Hahn2024Diffusion} models generation as cooperative sampling from diffusion energy-based models. While analytically robust, its reliance on iterative communication rounds increases communication overhead. 

\section{Background and Mathematical Preliminaries}

\paragraph{\textbf{Federated Learning under Single-Class Silo Regime.}}
\label{sec:FLObj}
In standard \textsc{FL} model aggregation, a decentralized network of $K$ clients aims to aggregate local empirical risks over shared discriminative parameters $\mathbf{w} \in \mathbb{R}^m$ in order to effectively approximate the global optimum. The \textit{de facto} standard \textsc{FL} approach~\cite{McMahan2017FedAvg} formally attempts to minimize the global objective function $\label{eq:fedavg}
\min_{\mathbf{w}} \mathcal{L}(\mathbf{w}) := \frac{1}{K} \sum_{k=1}^{K} \mathcal{L}_k(\mathbf{w})$, where $\mathcal{L}_k(\mathbf{w}) = \mathbb{E}_{(x, y) \sim p_k}[\ell(\mathbf{w}; x, y)]$ is the local objective representing the expected loss over the true data distribution $p_k$ at client $k$, and $\ell$ denotes the per-sample loss function. 

While aggregators such as \textsc{FedAvg} \cite{McMahan2017FedAvg} succeed when the local datasets $\mathcal{D}_k$ are IID, this paradigm collapses under severe non-IID settings. We parameterize label skewness via a Dirichlet distribution $\mathrm{Dir}_C(\alpha)$. As $\alpha \to 0$ in a cross-silo setting with $K$ clients and $C$ global classes, we reach the pathological \emph{Single-Class Silo} regime (Figure~\ref{fig:spectrum_distribution}c). Here, each client dataset $\mathcal{D}_k$ contains exactly one unique class, yielding strictly disjoint label sets ($\mathcal{Y}_i \cap \mathcal{Y}_j = \emptyset$ for $i \neq j$). Consequently, local optimization trajectories diverge, causing severe gradient conflict \cite{Yu2020GradientSurgery, Zhao2018NonIID}.
\begin{figure}[t]

    \centering
    \begin{subfigure}[b]{0.32\textwidth}
        \centering
        \begin{tikzpicture}[scale=0.60]
            \begin{axis}[
                ybar, 
                bar width=6pt, 
                enlarge x limits=0.25, 
                width=1.1\textwidth, height=4.5cm,
                symbolic x coords={1, 2, 3},
                xtick=data, ymin=0, ymax=1.0,
                ylabel={Density},
                title={\footnotesize \textbf{(a) IID / Uniform} ($\alpha \to \infty$)},
                ymajorgrids=true, grid style=dashed,
                legend style={at={(0.5,-0.3)}, anchor=north, font=\tiny}
            ]
                \addplot[fill=acadred]   coordinates {(1,0.33) (2,0.33) (3,0.33)};
                \addplot[fill=acadblue]  coordinates {(1,0.33) (2,0.33) (3,0.33)};
                \addplot[fill=acadgreen] coordinates {(1,0.33) (2,0.33) (3,0.33)};
            \end{axis}
        \end{tikzpicture}
    \end{subfigure}
    \hfill
    \begin{subfigure}[b]{0.32\textwidth}
        \centering
        \begin{tikzpicture}[scale=0.60]
            \begin{axis}[
                ybar, 
                bar width=6pt, 
                enlarge x limits=0.25, 
                width=1.1\textwidth, height=4.5cm,
                symbolic x coords={1, 2, 3},
                xtick=data, ymin=0, ymax=1.0,
                title={\footnotesize \textbf{(b) Dirichlet Skew} ($\alpha=0.5$)},
                ymajorgrids=true, grid style=dashed,
                legend style={at={(0.5,-0.3)}, anchor=north, font=\tiny}
            ]
                \addplot[fill=acadred]   coordinates {(1,0.6) (2,0.2) (3,0.2)};
                \addplot[fill=acadblue]  coordinates {(1,0.1) (2,0.8) (3,0.1)};
                \addplot[fill=acadgreen] coordinates {(1,0.2) (2,0.1) (3,0.7)};
            \end{axis}
        \end{tikzpicture}
    \end{subfigure}
    \hfill
    \begin{subfigure}[b]{0.32\textwidth}
        \centering
        \begin{tikzpicture}[scale=0.60]
            \begin{axis}[
                ybar stacked, 
                bar width=6pt, 
                enlarge x limits=0.25, 
                width=1.1\textwidth, height=4.5cm,
                symbolic x coords={1, 2, 3},
                xtick=data, ymin=0, ymax=1.0,
                title={\footnotesize \textbf{(c) Single-Class Silo} ($\alpha \to 0$)},
                ymajorgrids=true, grid style=dashed,
                legend style={at={(0.5,-0.3)}, anchor=north, font=\tiny}
            ]
                \addplot[fill=acadred]   coordinates {(1,1.0) (2,0.0) (3,0.0)};
                \addplot[fill=acadblue]  coordinates {(1,0.0) (2,1.0) (3,0.0)};
                \addplot[fill=acadgreen] coordinates {(1,0.0) (2,0.0) (3,1.0)};
            \end{axis}
        \end{tikzpicture}
    \end{subfigure}
    
    \caption{\textbf{The Spectrum of Heterogeneity.} \textbf{(a)} Ideal IID data (uniform overlap). \textbf{(b)} Dirichlet-distributed skew (imbalanced overlap). \textbf{(c)} Single-Class Silo (pathologically disjoint supports), representing the extreme theoretical limit.}
    \label{fig:spectrum_distribution}

    \begin{takeaway}
        Under the \textit{single-class silo regime}, standard FL collapses. This extreme label skew induces gradient conflict across clients, rendering standard parameter aggregation incapable of forming a coherent global decision boundary.
    \end{takeaway}
    
\end{figure}
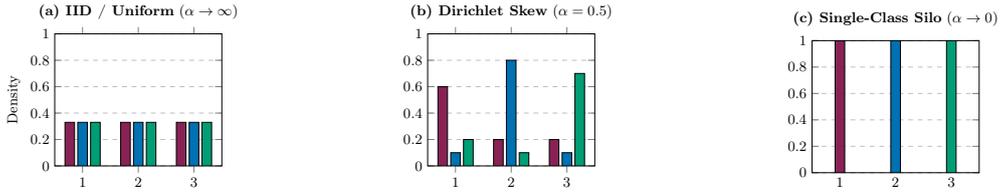


\paragraph{\textbf{FederatedFactory Formal Problem Statement.}}
\label{sec:Psett}
Our objectives challenge the standard setting defined in \cref{eq:fedavg}. We operate strictly within the \textit{cross-silo} regime, which stands in contrast to the more common \textit{cross-device} setting. While \textit{cross-device} learning involves massive populations of transient, resource-constrained mobile or IoT units, the \textit{cross-silo} regime is characterized by a small, fixed number of stakeholders (such as hospitals or financial institutions) possessing localized, high-capacity computational resources and persistent connectivity~\cite{Kairouz2021Advances}. We aim to design a \textsc{FL} optimization protocol $\mathcal{A}$ that recovers the optimal centralized decision boundary while operating under severe decentralized restrictions. Let $\mathcal{D}_{\mathrm{union}} = \bigcup_{k=1}^K \mathcal{D}_k$ denote the inaccessible theoretical centralized dataset, and let $\mathbf{w}^* = \arg\min_{\mathbf{w}} \mathbb{E}_{(x,y) \sim \mathcal{D}_{\mathrm{union}}}[\ell(\mathbf{w}, x, y)]$ represent the optimal parameters obtained via centralized training. We model the federated protocol $\mathcal{A}$ as a set of local client functions $f_k$ generating upstream messages $M_k = f_k(\mathcal{D}_k)$, and a server aggregation function $g$ such that the final synthesized model has parameters $\mathbf{w}_{\mathcal{A}} = g(M_1, \dots, M_K)$. We formalize this objective as a constrained optimization problem seeking to minimize the excess global risk:
\begin{equation}
\begin{aligned}
\min_{f_1, \dots, f_K, g} \quad & \left| \mathbb{E}_{(x,y) \sim \mathcal{D}_{\mathrm{union}}} \left[ \ell(\mathbf{w}_{\mathcal{A}}, x, y) \right] - \mathbb{E}_{(x,y) \sim \mathcal{D}_{\mathrm{union}}} \left[ \ell(\mathbf{w}^*, x, y) \right] \right| \\
\text{s.t.} \quad 
& \textbf{C1} \text{ (Pathological Skew):} \;\; \text{supp}(p_i(y)) \cap \text{supp}(p_j(y)) = \emptyset, \quad \forall i \neq j \\
& \textbf{C2} \text{ (Zero-Dependency):} \;\; \mathbf{w}_{\mathcal{A}} \text{ without external prior } \boldsymbol{\theta}_{\mathrm{FM}} \\
& \textbf{C3} \text{ (Strict Sovereignty):} \;\; x \notin M_k, \quad \forall x \in \mathcal{D}_k \\
& \textbf{C4} \text{ (One-Shot Comm.):} \;\; \mathcal{C}_{\text{rounds}} = 1
\end{aligned}
\end{equation}
Here, $\ell$ is the per-sample task-specific loss function. \textbf{C1} mandates convergence in the extreme limit of the Dirichlet distribution ($\alpha \to 0$), where $p_i(y)$ denotes the marginal label distribution at client $i$. In this pathological regime, discriminative gradient trajectories are actively opposed due to completely disjoint label supports. \textbf{C2} prohibits reliance on external foundation models ($\mathcal{M}_{\mathrm{FM}}$) or public proxy datasets to synthesize missing counterfactuals, isolating the framework from external prior bias. \textbf{C3} guarantees no raw samples are transmitted in the uplink communication messages $M_k$. Finally, \textbf{C4} restricts the system to exactly one asynchronous upstream communication per client (where $\mathcal{C}_{\text{rounds}}$ is the total number of communications), permanently eliminating iterative communication overhead and bidirectional synchronization requirements.

\subsection{Theoretical Guarantee of Zero-Dependency Convergence}
\label{sec:theoretical_motivation}

Recent generative \textsc{OSFL} frameworks ensure global convergence using server-side Foundation Models (\textsc{FM}s)~\cite{Yang2024FedDEO, Yang2024FedLMG, Qiu2025FedSDE}, assuming the \textsc{FM}'s distribution $p_{\mathrm{FM}}(\mathbf{x})$ sufficiently overlaps with local client data $p_k(\mathbf{x})$. This overlap is bounded by $\lambda$, which quantifies the maximum OOD penalty between the pre-trained manifold and private data. While effective for natural images, this assumption fails in specialized modalities (e.g, clinical) where severe domain shifts cause $\lambda \to \infty$.

To motivate \textsc{FederatedFactory}, we establish a convergence guarantee that completely bypasses this \textsc{FM} overlap assumption. Let $p_{\mathrm{union}}(\mathbf{x}, y)$ be the inaccessible true global joint distribution. Under the Single-Class Silo constraint (\textbf{C1}) with disjoint label supports, $p_{\mathrm{union}}$ reduces to a strict mixture of local marginals $p_k(\mathbf{x})$ weighted by empirical proportions $\pi_k = |\mathcal{D}_k| / |\mathcal{D}_{\mathrm{union}}|$:
\begin{equation}
    p_{\mathrm{union}}(\mathbf{x}, y) = \sum_{k=1}^K \pi_k p_k(\mathbf{x}) \mathbb{I}(y = y_k)
\end{equation}
where $\mathbb{I}(\cdot)$ is the indicator function isolating the localized class.

By transmitting only the localized generative prior $\boldsymbol{\theta}_k$ (\textbf{C3}, \textbf{C4}), the server constructs a fully synthetic global distribution $\hat{p}_{\mathrm{syn}}(\mathbf{x}, y)$ \textit{ex nihilo}, complying with the zero-dependency constraint (\textbf{C2}):
\begin{equation}
    \hat{p}_{\mathrm{syn}}(\mathbf{x}, y) = \sum_{k=1}^K \pi_k p_{\boldsymbol{\theta}_k}(\mathbf{x}) \mathbb{I}(y = y_k)
\end{equation}

To prove convergence without external priors, we establish two standard assumptions:

\textbf{Assumption 1 (Local Generative Convergence).} \textit{The local diffusion training objective (ELBO) directly minimizes the Kullback-Leibler (KL) divergence. We assume this local optimization error is bounded by $\epsilon_k$ for all clients $k \in \{1, \dots, K\}$:}
\begin{equation}
    \label{eq:local_kl}
    \mathrm{KL}(p_k(\mathbf{x}) \parallel p_{\boldsymbol{\theta}_k}(\mathbf{x})) \le \epsilon_k
\end{equation}

\textbf{Assumption 2 (Bounded Risk Function).} \textit{The task-specific per-sample loss function $\ell(\mathbf{w}, \mathbf{x}, y)$ is bounded by a constant $M > 0$, such that $\sup_{\mathbf{w}, \mathbf{x}, y} \ell(\mathbf{w}, \mathbf{x}, y) \le M$.}

\begin{mdframed}[topline=false, bottomline=false, rightline=false, leftline=true, linewidth=2pt, linecolor=geoPrimary, innerleftmargin=8pt, innerrightmargin=0pt, innertopmargin=0pt, innerbottommargin=0pt, skipabove=10pt, skipbelow=10pt]
\begin{lemma}[Global Manifold Recovery]
\label{lemma:manifold_recovery}
Under constraint \textbf{C1} and Assumption 1, the KL divergence between the true global distribution and the zero-dependency synthetic distribution is strictly bounded by the weighted sum of the local diffusion errors: $\mathrm{KL}(p_{\mathrm{union}} \parallel \hat{p}_{\mathrm{syn}}) \le \sum_{k=1}^K \pi_k \epsilon_k$.
\end{lemma}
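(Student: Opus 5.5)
The plan is to compute $\mathrm{KL}(p_{\mathrm{union}} \parallel \hat{p}_{\mathrm{syn}})$ directly from its definition over the joint space $(\mathbf{x}, y)$, and to exploit the fact that under \textbf{C1} the labels $y_1, \dots, y_K$ are pairwise distinct. For each fixed label $y = y_k$, exactly one indicator $\mathbb{I}(y = y_k)$ is nonzero in both mixtures. So the joint densities factor as
\begin{equation*}
p_{\mathrm{union}}(\mathbf{x}, y_k) = \pi_k p_k(\mathbf{x}), \qquad \hat{p}_{\mathrm{syn}}(\mathbf{x}, y_k) = \pi_k p_{\boldsymbol{\theta}_k}(\mathbf{x}).
\end{equation*}
For any $y$ outside $\{y_1,\dots,y_K\}$, both densities vanish and that label contributes nothing.

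First, write the KL as a sum over labels of an integral over $\mathbf{x}$:
\begin{equation*}
\mathrm{KL}(p_{\mathrm{union}} \parallel \hat{p}_{\mathrm{syn}}) = \sum_{k=1}^K \int \pi_k p_k(\mathbf{x}) \log \frac{\pi_k p_k(\mathbf{x})}{\pi_k p_{\boldsymbol{\theta}_k}(\mathbf{x})} \, d\mathbf{x}.
\end{equation*}
The key point is that the mixture weights $\pi_k$ are identical in both distributions. They therefore cancel inside the logarithm, so no label-marginal mismatch term appears. This is where the paper's choice to use the same empirical $\pi_k$ in $\hat{p}_{\mathrm{syn}}$ matters. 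The expression collapses to $\sum_k \pi_k \mathrm{KL}(p_k \parallel p_{\boldsymbol{\theta}_k})$, so the chain rule for KL holds with equality here.

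Then apply Assumption 1 termwise: $\mathrm{KL}(p_k \parallel p_{\boldsymbol{\theta}_k}) \le \epsilon_k$. Since each $\pi_k \ge 0$, this gives the bound $\sum_k \pi_k \epsilon_k$; in fact it is an equality before Assumption 1 is invoked.

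The only step needing care is measure-theoretic bookkeeping, and I expect it to be the main (mild) obstacle. It has two parts:
\begin{itemize}
\item \emph{Absolute continuity.} We need $p_{\mathrm{union}} \ll \hat{p}_{\mathrm{syn}}$. This follows from $p_k \ll p_{\boldsymbol{\theta}_k}$, which is implied by the finiteness of $\epsilon_k$ in Assumption 1.
\item \emph{Reference measure.} The sum-over-labels decomposition must be justified with respect to the product of Lebesgue (or counting) measure on $\mathbf{x}$ and counting measure on $y$.
\end{itemize}
If clients with $\pi_k = 0$ arise, their terms vanish under the convention $0 \log 0 = 0$, so they cause no trouble.
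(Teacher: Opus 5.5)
Your proposal is correct and matches the paper's proof step for step. You expand the joint KL over labels and use the disjoint label supports from \textbf{C1} to collapse the sum to one term per client. You then cancel the common weights $\pi_k$ inside the logarithm to obtain $\sum_k \pi_k \mathrm{KL}(p_k \parallel p_{\boldsymbol{\theta}_k})$, and finish with Assumption 1. Three of your points are sound additions that the paper leaves implicit: the absolute-continuity check, the choice of product reference measure, and the observation that the identity holds with equality before Assumption 1 is invoked.
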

\end{mdframed}

\begin{proof}
Based on the fundamental definition of the joint KL divergence, we expand the integral:
\begin{equation}
    \mathrm{KL}(p_{\mathrm{union}} \parallel \hat{p}_{\mathrm{syn}}) = \sum_{y} \int p_{\mathrm{union}}(\mathbf{x}, y) \log \frac{p_{\mathrm{union}}(\mathbf{x}, y)}{\hat{p}_{\mathrm{syn}}(\mathbf{x}, y)} d\mathbf{x}
\end{equation}
Because the label supports are completely disjoint (\textbf{C1}), for any given class $y_k$, the cross-terms of the mixture evaluate exactly to zero. The summation over $y$ thus collapses perfectly to the individual client indices $k$. Substituting the marginal definitions:
\begin{equation}
    = \sum_{k=1}^K \int \Big( \pi_k p_k(\mathbf{x}) \Big) \log \frac{\pi_k p_k(\mathbf{x})}{\pi_k p_{\boldsymbol{\theta}_k}(\mathbf{x})} d\mathbf{x} = \sum_{k=1}^K \pi_k \mathrm{KL}(p_k(\mathbf{x}) \parallel p_{\boldsymbol{\theta}_k}(\mathbf{x}))
\end{equation}
Substituting the local convergence bound from Assumption 1 yields the final inequality. $\hfill \blacksquare$
\end{proof}

We define $\mathcal{L}_{\mathrm{true}}(\mathbf{w}) = \mathbb{E}_{(\mathbf{x},y) \sim p_{\mathrm{union}}}[\ell(\mathbf{w}, \mathbf{x}, y)]$ as the true centralized risk, and $\mathcal{L}_{\mathrm{syn}}(\mathbf{w}) = \mathbb{E}_{(\mathbf{x},y) \sim \hat{p}_{\mathrm{syn}}}[\ell(\mathbf{w}, \mathbf{x}, y)]$ as the surrogate risk evaluated on the synthesized dataset. Let $\bar{\epsilon} = \sqrt{\frac{1}{2} \sum_{k=1}^K \pi_k \epsilon_k}$ denote the aggregate generative error mapped to the Total Variation (TV) space.

\begin{mdframed}[topline=false, bottomline=false, rightline=false, leftline=true, linewidth=2pt, linecolor=geoPrimary, innerleftmargin=8pt, innerrightmargin=0pt, innertopmargin=0pt, innerbottommargin=0pt, skipabove=10pt, skipbelow=10pt]
\begin{theorem}[Zero-Dependency Aggregation]
\label{theorem:convergence}
Under Assumptions 1 and 2, the excess global risk of the classifier $\mathbf{w}_{\mathcal{A}} = \arg\min_{\mathbf{w}} \mathcal{L}_{\mathrm{syn}}(\mathbf{w})$ trained exclusively on the generated synthetic distribution, compared to the optimal centralized classifier $\mathbf{w}^* = \arg\min_{\mathbf{w}} \mathcal{L}_{\mathrm{true}}(\mathbf{w})$, is strictly bounded by:

\vspace{0.2cm}
\begin{minipage}{0.55\textwidth}
    \begin{equation}
        \underbrace{\mathcal{L}_{\mathrm{true}}(\mathbf{w}_{\mathcal{A}})}_{\text{Federated Classifier}} - \underbrace{\mathcal{L}_{\mathrm{true}}(\mathbf{w}^*)}_{\text{Centralized Classifier}} \le \underbrace{2M\bar{\epsilon}}_{\text{Max Penalty}}
    \end{equation}
\end{minipage}%
\begin{minipage}{0.45\textwidth}
    \centering
    \begin{tikzpicture}[scale=0.90, every node/.style={scale=0.85}]
        \draw[->, gray!80] (-1.3, 0) -- (1.4, 0) node[right, font=\scriptsize, text=black] {$\mathbf{w}$};
        \draw[->, gray!80] (0, -0.2) -- (0, 2.2) node[above, font=\scriptsize, text=black] {Risk $\mathcal{L}$};
        
        \draw[thick, acadblue, domain=-1.1:1.1, samples=50] plot (\x, {1.5*\x*\x + 0.2});
        \node[text=acadblue, font=\scriptsize\bfseries] at (-0.8, 1.8) {$\mathcal{L}_{\mathrm{true}}$};
        
        \draw[thick, dashed, acadred, domain=-0.2:1.3, samples=50] plot (\x, {1.5*(\x-0.6)*(\x-0.6) + 0.4});
        \node[text=acadred, font=\scriptsize\bfseries] at (1.05, 1.8) {$\mathcal{L}_{\mathrm{syn}}$};
        
        \fill[acadblue] (0, 0.2) circle (1.5pt) node[below left, font=\scriptsize, text=black] {$\mathbf{w}^*$};
        \fill[acadred] (0.6, 0.4) circle (1.5pt) node[below, font=\scriptsize, text=black, yshift=-1pt] {$\mathbf{w}_{\mathcal{A}}$};
        
        \fill[acadblue] (0.6, 0.74) circle (1.5pt);
        \draw[dotted, gray!80, thick] (0.6, 0.4) -- (0.6, 0.74); 
        
        \draw[<->, thick, acadgreen] (1.1, 0.2) -- (1.1, 0.74) node[midway, right, font=\scriptsize\bfseries] {$\le 2M\bar{\epsilon}$};
        \draw[dotted, gray!80, thick] (0, 0.2) -- (1.1, 0.2);
        \draw[dotted, gray!80, thick] (0.6, 0.74) -- (1.1, 0.74);
    \end{tikzpicture}
\end{minipage}
\vspace{0.1cm}
\end{theorem}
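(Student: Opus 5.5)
The plan is the standard three-term decomposition of excess risk, with Lemma~\ref{lemma:manifold_recovery} and Pinsker's inequality supplying the uniform control. Write
\begin{equation*}
\mathcal{L}_{\mathrm{true}}(\mathbf{w}_{\mathcal{A}}) - \mathcal{L}_{\mathrm{true}}(\mathbf{w}^*) = \big[\mathcal{L}_{\mathrm{true}}(\mathbf{w}_{\mathcal{A}}) - \mathcal{L}_{\mathrm{syn}}(\mathbf{w}_{\mathcal{A}})\big] + \big[\mathcal{L}_{\mathrm{syn}}(\mathbf{w}_{\mathcal{A}}) - \mathcal{L}_{\mathrm{syn}}(\mathbf{w}^*)\big] + \big[\mathcal{L}_{\mathrm{syn}}(\mathbf{w}^*) - \mathcal{L}_{\mathrm{true}}(\mathbf{w}^*)\big].
\end{equation*}
The middle bracket is $\le 0$ because $\mathbf{w}_{\mathcal{A}}$ minimizes $\mathcal{L}_{\mathrm{syn}}$. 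It therefore suffices to prove a uniform deviation bound $\sup_{\mathbf{w}} |\mathcal{L}_{\mathrm{true}}(\mathbf{w}) - \mathcal{L}_{\mathrm{syn}}(\mathbf{w})| \le M\bar{\epsilon}$. Applying it to the first and third brackets gives the total $2M\bar{\epsilon}$.

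\textbf{Uniform deviation.} Fix $\mathbf{w}$ and write the difference of expectations as $\sum_y \int \ell(\mathbf{w},\mathbf{x},y)\,(p_{\mathrm{union}} - \hat{p}_{\mathrm{syn}})(\mathbf{x},y)\,d\mathbf{x}$.
\begin{itemize}
\item Assumption~2 gives $\ell \le M$, and $\ell \ge 0$ for a standard nonnegative loss, so $\ell$ takes values in $[0,M]$.
\item Both densities integrate to one, so we may subtract the constant $M/2$ from $\ell$ without changing the integral. The shifted integrand satisfies $|\ell - M/2| \le M/2$.
\item Hence the deviation is at most $\frac{M}{2}\|p_{\mathrm{union}} - \hat{p}_{\mathrm{syn}}\|_1 = M\,\mathrm{TV}(p_{\mathrm{union}}, \hat{p}_{\mathrm{syn}})$.
\end{itemize}
If only the one-sided bound $\ell \le M$ is available, the cruder estimate $2M\,\mathrm{TV}$ still follows from $|\ell| \le M$. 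This is where the constant could double, and I would check the loss convention to settle it.

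\textbf{From KL to TV.} Pinsker's inequality gives $\mathrm{TV}(p_{\mathrm{union}}, \hat{p}_{\mathrm{syn}}) \le \sqrt{\tfrac12 \mathrm{KL}(p_{\mathrm{union}} \parallel \hat{p}_{\mathrm{syn}})}$. Lemma~\ref{lemma:manifold_recovery} (which uses C1 and Assumption~1) bounds the KL term by $\sum_k \pi_k \epsilon_k$. Together these give $\mathrm{TV} \le \sqrt{\tfrac12\sum_k \pi_k\epsilon_k} = \bar{\epsilon}$, which is exactly how $\bar{\epsilon}$ was defined. Combining with the previous step yields $\sup_{\mathbf{w}}|\mathcal{L}_{\mathrm{true}} - \mathcal{L}_{\mathrm{syn}}| \le M\bar{\epsilon}$, and hence the claimed $2M\bar{\epsilon}$.

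\textbf{Main obstacle.} The delicate step is getting the constant in the uniform deviation bound right.
\begin{itemize}
\item With the centering trick and a nonnegative loss, each deviation is at most $M\bar{\epsilon}$, and the two of them sum to exactly $2M\bar{\epsilon}$.
\item With only $|\ell| \le M$, the naive bound is $2M\bar{\epsilon}$ per term, i.e.\ $4M\bar{\epsilon}$ in total. In that case I would invoke nonnegativity of the task loss (cross-entropy or 0--1) explicitly to recover the stated constant.
\end{itemize}
A secondary point is that the minimizers $\mathbf{w}_{\mathcal{A}}$ and $\mathbf{w}^*$ must exist. If they do not, the same argument run with $\delta$-approximate minimizers gives the bound up to an additive $\delta$, and letting $\delta \to 0$ recovers it exactly.
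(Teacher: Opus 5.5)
Your proposal is correct and follows the paper's proof essentially step for step. The paper also uses the uniform deviation bound $|\mathcal{L}_{\mathrm{true}}(\mathbf{w})-\mathcal{L}_{\mathrm{syn}}(\mathbf{w})|\le M\,\mathrm{TV}(p_{\mathrm{union}},\hat{p}_{\mathrm{syn}})\le M\sqrt{\tfrac12\mathrm{KL}(p_{\mathrm{union}}\parallel\hat{p}_{\mathrm{syn}})}\le M\bar{\epsilon}$, obtained via Pinsker and the Global Manifold Recovery lemma and applied on either side of $\mathcal{L}_{\mathrm{syn}}(\mathbf{w}_{\mathcal{A}})\le\mathcal{L}_{\mathrm{syn}}(\mathbf{w}^*)$. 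Your centering argument makes explicit what the paper's $M\cdot\mathrm{TV}$ step silently uses, namely $\ell\in[0,M]$ rather than only the upper bound in Assumption~2; an upper bound alone gives no deviation bound at all, so invoking nonnegativity of the task loss, as you do, is the right way to close it.
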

\end{mdframed}

\begin{proof}
By the integral definition of the Total Variation (TV) distance and Pinsker's inequality, the expected risk deviation for any arbitrary classifier $\mathbf{w}$ is strictly bounded by:
\begin{equation}
    \left| \mathcal{L}_{\mathrm{true}}(\mathbf{w}) - \mathcal{L}_{\mathrm{syn}}(\mathbf{w}) \right| \le M \cdot \mathrm{TV}(p_{\mathrm{union}} \parallel \hat{p}_{\mathrm{syn}}) \le M \sqrt{\frac{1}{2} \mathrm{KL}(p_{\mathrm{union}} \parallel \hat{p}_{\mathrm{syn}})} \le M\bar{\epsilon}
\end{equation}
Because the federated classifier $\mathbf{w}_{\mathcal{A}}$ is optimized to minimize the synthetic risk, it holds that $\mathcal{L}_{\mathrm{syn}}(\mathbf{w}_{\mathcal{A}}) \le \mathcal{L}_{\mathrm{syn}}(\mathbf{w}^*)$. Applying the bound $M\bar{\epsilon}$ symmetrically transitions between true and synthetic risks:
\begin{equation}
    \mathcal{L}_{\mathrm{true}}(\mathbf{w}_{\mathcal{A}}) \le \mathcal{L}_{\mathrm{syn}}(\mathbf{w}_{\mathcal{A}}) + M\bar{\epsilon} \le \mathcal{L}_{\mathrm{syn}}(\mathbf{w}^*) + M\bar{\epsilon} \le \mathcal{L}_{\mathrm{true}}(\mathbf{w}^*) + 2M\bar{\epsilon}
\end{equation}
which concludes the proof. $\hfill \blacksquare$
\end{proof}


\textbf{Remark.} \cref{theorem:convergence} formally demonstrates that global convergence is achievable under constraints \textbf{C1}--\textbf{C4}. By exchanging generative priors, the global distribution approximation error is entirely bounded by $\epsilon_k$. Unlike \textsc{FM}-dependent methods where the bound is dominated by a rigid, often infinite projection error ($\lambda$) caused by out-of-distribution local data, \textsc{FederatedFactory}'s error approaches zero simply by training the local models to standard convergence.

\section{Methodology}
\label{sec:methodology}

To satisfy the optimization objective outlined in \cref{sec:Psett}, \textsc{FederatedFactory} abandons discriminative parameter aggregation entirely. Under the pathological single-class silo regime (\textbf{C1}), gradient trajectories are in conflict~\cite{Yu2020GradientSurgery}. We resolve this by replacing the unit of communication from discriminative gradients to localized generative prior parameters, establishing a zero-dependency (\textbf{C2}), no-data sharing (\textbf{C3}), one-shot framework (\textbf{C4}) for distributed synthesis.

\subsection{FederatedFactory}
Instead of transmitting parameter updates, which actively conflict across disjoint label spaces~\cite{Yu2020GradientSurgery}, \textsc{FederatedFactory} relies on the transmission of generative model parameters. We define the \emph{Factory} as a localized, self-contained generative module. Each client $k \in \{1, \dots, K\}$ independently optimizes a Factory on its private dataset $\mathcal{D}_k$. While our framework is architecture-agnostic, we specifically instantiate the Factory using the Score-based Diffusion model EDM2 \cite{Karras2024EDM2}. 

In this context, the ``generative blueprint'' consists of the denoising function $G_{\boldsymbol{\theta}_k}$. Unlike traditional Autoencoders~\cite{Davidson2018HypersphericalVAE}, diffusion models do not utilize a deterministic encoder to compress data into a latent bottleneck. Instead, the reverse denoising process acts as the fundamental mapping from a standard normal latent space $\mathcal{Z} \sim \mathcal{N}(\mathbf{0}, \mathbf{I})$ to the learned local manifold $\hat{\mathcal{M}}_k$ \cite{Ho2020DDPM}.


While we empirically focus on Diffusion for its high-fidelity clinical synthesis, \textsc{FederatedFactory} natively supports architectures with explicit encoder-decoder splits (e.g., VAEs~\cite{Kingma2013VAE}) or adversarial mappings (GANs~\cite{Goodfellow2014GAN}), where the shared parameters $\boldsymbol{\theta}_k$ would correspond to the physical decoder or generator. Reducing the communication payload to independent Factories enables flexible global synthesis. Specifically, we instantiate this framework through two distinct operational modes: \textbf{(A)} a centralized architecture designed for consortiums equipped with a trusted aggregator (\cref{subsec:centralized_protocol}), and \textbf{(B)} a fully decentralized peer-to-peer (\textsc{P2P}) mesh optimized for environments in which a centralized entity cannot be trusted (\cref{subsec:decentralized_protocol}).

\begin{takeaway}
By transmitting generative Factories, \textsc{FederatedFactory} enables a blueprint-based \textit{ex nihilo} generation of class-balanced datasets without relying on overlapping data supports or external \textsc{FM}s.
\end{takeaway}

\paragraph{\textbf{Protocol A: Centralized Synthesis.}}
\label{subsec:centralized_protocol}
Under this configuration (Figure~\ref{fig:centralized_workflow}), the framework designates the central aggregator as the point of data generation, restricting network overhead to exactly one upstream transmission per client ($\mathcal{C}_{\text{rounds}} = 1$).
The server aggregates the generative mappings into a unified library $\boldsymbol{\Theta} = \{G_{\boldsymbol{\theta}_1}, \dots, G_{\boldsymbol{\theta}_K}\}$. Exploiting the universality of the standard normal latent space $\mathcal{Z}$, the server samples noise vectors $\mathbf{z} \sim \mathcal{N}(\mathbf{0}, \mathbf{I})$ and projects them through the respective Factory. This materializes a fully synthetic, class-balanced global dataset $\hat{\mathcal{D}}_{\mathrm{syn}}$ \textit{ex nihilo}, with an arbitrarily large number of samples whose diversity is bounded by the entropy of the localized generative priors:
\begin{equation}
\hat{\mathcal{D}}_{\mathrm{syn}} = \bigcup_{k=1}^K \left\{ (\hat{\mathbf{x}}, y_k) \mid \hat{\mathbf{x}} = G_{\boldsymbol{\theta}_k}(\mathbf{z}), \mathbf{z} \sim \mathcal{N}(\mathbf{0}, \mathbf{I}) \right\}
\end{equation}
We then optimize a global classifier $\mathbf{w}$ (ResNet-50~\cite{He2016ResNet}) exclusively on $\hat{\mathcal{D}}_{\mathrm{syn}}$. We adopt this model to isolate the improvements of our generative framework without introducing architectural bottlenecks. We reformulate the standard FL parameter aggregation into an optimization over a global surrogate objective:
\begin{equation}
\min_{\mathbf{w}} \frac{1}{K} \sum_{k=1}^K \mathbb{E}_{\mathbf{z} \sim \mathcal{N}(\mathbf{0}, \mathbf{I})} \left[ \ell \left( \mathbf{w}; G_{\boldsymbol{\theta}_k}(\mathbf{z}), y_k \right) \right]
\end{equation}
Because this synthesis relies exclusively on localized generative priors optimized directly on the true data manifolds, \textsc{FederatedFactory} ensures that the generated distribution includes the rare diagnostic support (as required in \cref{sec:theoretical_motivation}).

\begin{figure}[htpb]
    \centering
    \definecolor{nips_red}{RGB}{136, 34, 85}
    \definecolor{nips_blue}{RGB}{0, 113, 178}
    \definecolor{nips_green}{RGB}{0, 158, 116}
    \definecolor{nips_purple}{RGB}{136, 136, 136} 
    \definecolor{nips_dark}{RGB}{60, 60, 65}
    \definecolor{nips_bg}{RGB}{245, 246, 250}

    \resizebox{\textwidth}{!}{%
    \begin{tikzpicture}[
        font=\sffamily,
        >=LaTeX, 
        node distance=1.0cm and 1.5cm,
        silo/.style={
            draw=gray!20, fill=gray!5, rounded corners=8pt, 
            inner sep=8pt, dashed, line width=0.8pt
        },
        server_box/.style={
            draw=gray!30, fill=white, rounded corners=15pt, 
            drop shadow={opacity=0.15, shadow xshift=2pt, shadow yshift=-2pt}, 
            line width=1.0pt
        },
        database/.style={
            cylinder, shape border rotate=90, aspect=0.25,
            draw=gray!60, line width=0.8pt,
            minimum height=1.0cm, minimum width=0.85cm, 
            font=\scriptsize\bfseries, align=center,
            top color=white, bottom color=gray!10 
        },
        factory/.style={
            rectangle, draw=nips_dark!80, line width=0.8pt, rounded corners=4pt,
            fill=white, minimum width=1.6cm, minimum height=0.9cm, 
            font=\bfseries\small, align=center, drop shadow={opacity=0.08}
        },
        decoder/.style={
            factory, fill=white, dashed, font=\small\itshape, line width=0.8pt
        },
        noise_op/.style={
            circle, draw=nips_purple, thick, fill=white, 
            inner sep=2pt, minimum size=1.0cm,
            font=\small, align=center, dashed
        },
        flow/.style={->, thick, draw=nips_dark!80},
        upload/.style={->, thick, dashed, draw=nips_dark!80, shorten >=2pt, shorten <=2pt},
        math_label/.style={midway, above, font=\tiny, text=gray!90}
    ]

        
        \node[database, draw=nips_red, bottom color=nips_red!10] (RealA) at (0, 3.5) {Real\\$\mathcal{D}_1$};
        \node[factory, right=1.4cm of RealA] (FactA) {Factory\\A};
        \draw[flow] (RealA) -- node[math_label] {$x \sim p_1(x)$} (FactA);
        
        \begin{scope}[on background layer]
            \node[silo, fit=(RealA) (FactA), label={[text=nips_red, font=\bfseries\small]north:Client A (Class 1)}] (SiloA) {};
        \end{scope}

        \node[database, draw=nips_blue, bottom color=nips_blue!10] (RealB) at (0, 0) {Real\\$\mathcal{D}_2$};
        \node[factory, right=1.4cm of RealB] (FactB) {Factory\\B};
        \draw[flow] (RealB) -- node[math_label] {$x \sim p_2(x)$} (FactB);
        
        \begin{scope}[on background layer]
            \node[silo, fit=(RealB) (FactB), label={[text=nips_blue, font=\bfseries\small]north:Client B (Class 2)}] (SiloB) {};
        \end{scope}

        \node[database, draw=nips_green, bottom color=nips_green!10] (RealC) at (0, -3.5) {Real\\$\mathcal{D}_3$};
        \node[factory, right=1.4cm of RealC] (FactC) {Factory\\C};
        \draw[flow] (RealC) -- node[math_label] {$x \sim p_3(x)$} (FactC);
        
        \begin{scope}[on background layer]
            \node[silo, fit=(RealC) (FactC), label={[text=nips_green, font=\bfseries\small]north:Client C (Class 3)}] (SiloC) {};
        \end{scope}

        \node[below=0.1cm of SiloC, font=\scriptsize\color{gray}] {(i) Local Training (e.g. EDM2 \cite{Karras2024EDM2})};


        \node[decoder, draw=nips_red] (DecA) at (6.0, 1.8) {$G_{\theta_1}$};
        \node[decoder, draw=nips_blue] (DecB) at (6.0, 0) {$G_{\theta_2}$};
        \node[decoder, draw=nips_green] (DecC) at (6.0, -1.8) {$G_{\theta_3}$};

        \node[noise_op] (NoiseA) at (8.2, 1.8) {$z\!\sim\!\mathcal{N}$};
        \node[noise_op] (NoiseB) at (8.2, 0) {$z\!\sim\!\mathcal{N}$};
        \node[noise_op] (NoiseC) at (8.2, -1.8) {$z\!\sim\!\mathcal{N}$};

        \node[database, pattern=north east lines, pattern color=nips_red!40, draw=nips_red] (SynA) at (10.5, 1.8) {Syn\\$\hat{\mathcal{D}}_1$};
        \node[database, pattern=north east lines, pattern color=nips_blue!40, draw=nips_blue] (SynB) at (10.5, 0) {Syn\\$\hat{\mathcal{D}}_2$};
        \node[database, pattern=north east lines, pattern color=nips_green!40, draw=nips_green] (SynC) at (10.5, -1.8) {Syn\\$\hat{\mathcal{D}}_3$};

        \draw[flow, dashed] (DecA) -- (NoiseA);
        \draw[flow, dashed] (DecB) -- (NoiseB);
        \draw[flow, dashed] (DecC) -- (NoiseC);
        
        \draw[flow] (NoiseA) -- node[midway, above, font=\tiny, yshift=-1pt] {$\hat{x}=G_{\theta_1}(z)$} (SynA);
        \draw[flow] (NoiseB) -- node[midway, above, font=\tiny, yshift=-1pt] {$\hat{x}=G_{\theta_2}(z)$} (SynB);
        \draw[flow] (NoiseC) -- node[midway, above, font=\tiny, yshift=-1pt] {$\hat{x}=G_{\theta_3}(z)$} (SynC);
        
        \begin{scope}[on background layer]
            \node[server_box, fit=(DecA) (SynA) (SynC) (DecC), inner xsep=12pt, inner ysep=20pt, label={[font=\bfseries]north:Server (Aggregator)}] (ServerFrame) {};
        \end{scope}

        \node[below=0.2cm of ServerFrame, font=\scriptsize\color{gray}] {(ii) Upload \quad (iii) Generative Sampling: $z \to \hat{x}$};


        \draw[upload] (FactA.east) to[out=0, in=180] 
            node[pos=0.4, above, sloped, font=\tiny] {Params $\theta_1$} 
            (DecA.west);
         
        \draw[upload] (FactB.east) to[out=0, in=180] 
            node[pos=0.4, above, sloped, font=\tiny] {Params $\theta_2$} 
            (DecB.west);
         
        \draw[upload] (FactC.east) to[out=0, in=180] 
            node[pos=0.65, above, sloped, font=\tiny] {Params $\theta_3$} 
            (DecC.west);

        \node[rectangle, rounded corners=6pt, draw=nips_dark, line width=1.2pt, fill=white, 
              minimum height=2.5cm, minimum width=2.5cm, align=center,
              drop shadow={opacity=0.2, shadow xshift=3pt}] (Global) at (14.2, 0) {\textbf{Global}\\\textbf{Classifier}};

        \draw[flow] (SynA.east) to[out=0, in=165] (Global.west);
        \draw[flow] (SynB.east) -- (Global.west);
        \draw[flow] (SynC.east) to[out=0, in=195] (Global.west);

        \node[below=0.6cm of Global, font=\small\bfseries] (Inf) {Inference};
        \draw[->, very thick, nips_dark] (Global.south) -- (Inf);

        \node[below=0.3cm of Inf, font=\scriptsize\color{gray}] {(iv) Global Training (e.g. ResNet-50 \cite{He2016ResNet})};
        
    \end{tikzpicture}
    }
    \caption{\textbf{Centralized FederatedFactory Protocol.} Aggregated Factories produce a fully synthetic dataset $\hat{\mathcal{D}}$, enabling a global classifier training without raw data access.}
    \label{fig:centralized_workflow}
\end{figure}
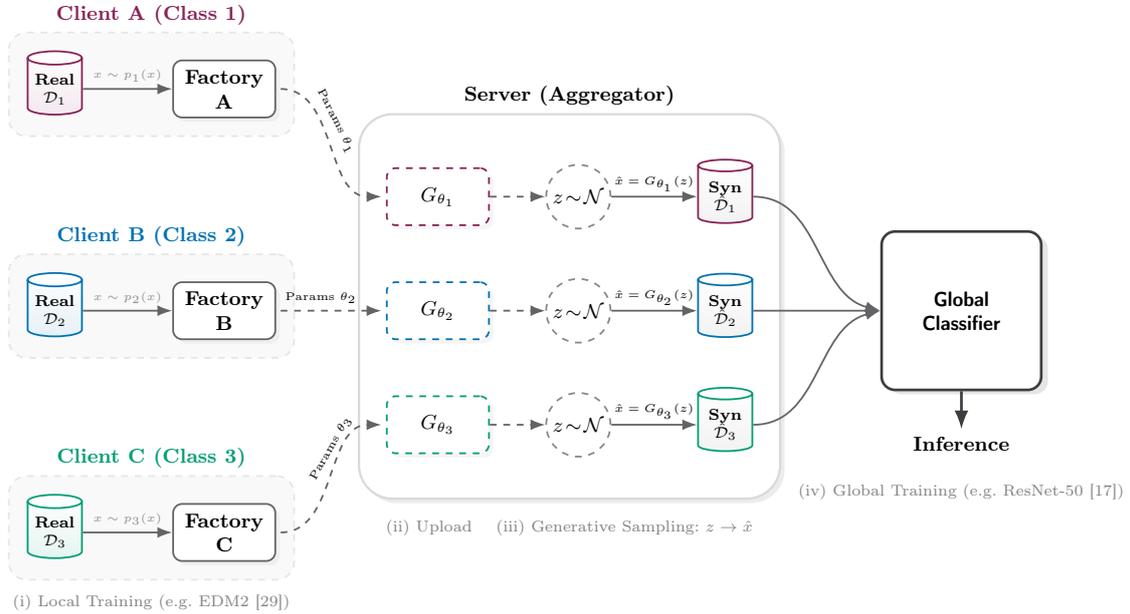

\paragraph{\textbf{Protocol B: Decentralized Synthesis.}}
\label{subsec:decentralized_protocol}
Under this configuration (Figure~\ref{fig:decentralized_workflow}), each client $k$ broadcasts its generative prior $G_{\boldsymbol{\theta}_k}$ to all participating peers. Upon receiving the complement Factories for all disjoint classes $j \neq k$, every client $k$ locally synthesizes the missing distributions, constructing a hybrid dataset $\mathcal{D}_k^{\mathrm{mix}}$ in which local real data $\mathcal{D}_k$ is augmented with synthetic samples $\hat{\mathbf{x}}$:
\begin{equation}
    \mathcal{D}_k^{\mathrm{mix}} = {\mathcal{D}_k} \cup {\bigcup_{j \neq k} \left\{ (\hat{\mathbf{x}}, y_j) \mid \hat{\mathbf{x}} = G_{\boldsymbol{\theta}_j}(\mathbf{z}), \mathbf{z} \sim \mathcal{N}(\mathbf{0}, \mathbf{I}) \right\}}
\end{equation}
Client $k$ then optimizes a local discriminative expert classifier $f_{\mathbf{w}_k}$ exclusively on $\mathcal{D}_k^{\mathrm{mix}}$. At inference time, the distributed mesh aggregates global decisions using a Product of Experts (\textsc{PoE}) formulation \cite{Hinton2002PoE}. Unlike standard mixture-based ensembling, which acts as a logical disjunction and dilutes predictive certainty, \textsc{PoE} functions as a strict intersection of constraints. For a target sample $\mathbf{x}$, the global inference probability is the renormalized product: $p_{\mathrm{PoE}}(y \mid \mathbf{x}) = \frac{1}{Z} \prod_{k=1}^K p_k(y \mid \mathbf{x})$ where $Z$ is the partition function. This enforces a strict consensus veto. To prevent a single overconfident but incorrect expert from indiscriminately zeroing out the consensus, the aggregation operates in log-space with a strict minimum probability floor. Thus, if a local expert assigns a near-zero probability to a spurious feature, the aggregate probability strictly collapses, preserving high-confidence decision boundaries across the distributed network. 

\begin{figure}[htpb]
    \centering
    \definecolor{nips_red}{RGB}{136, 34, 85}
    \definecolor{nips_blue}{RGB}{0, 113, 178}
    \definecolor{nips_green}{RGB}{0, 158, 116}
    \definecolor{nips_purple}{RGB}{136, 136, 136} 
    \definecolor{nips_dark}{RGB}{60, 60, 65}
    \definecolor{nips_bg}{RGB}{245, 246, 250}
    
    \resizebox{\textwidth}{!}{%
    \begin{tikzpicture}[
        font=\sffamily,
        >=LaTeX,
        node distance=0.8cm and 1.2cm,
        silo/.style={
            draw=gray!20, fill=gray!5, rounded corners=8pt, 
            inner sep=8pt, dashed, line width=0.8pt
        },
        database/.style={
            cylinder, shape border rotate=90, aspect=0.25,
            draw=gray!60, line width=0.8pt,
            minimum height=0.8cm, minimum width=0.7cm, 
            font=\scriptsize\bfseries, align=center,
            top color=white, bottom color=gray!10
        },
        factory/.style={
            rectangle, draw=nips_dark!80, line width=0.8pt, rounded corners=4pt,
            fill=white, minimum width=1.4cm, minimum height=0.8cm, 
            font=\bfseries\scriptsize, align=center, drop shadow={opacity=0.08}
        },
        decoder_mini/.style={
            rectangle, draw=gray!50, dashed, rounded corners=2pt,
            fill=white, minimum width=0.9cm, minimum height=0.5cm,
            font=\tiny\itshape, align=center
        },
        model_local/.style={
            rectangle, draw=nips_dark, line width=1.0pt, rounded corners=4pt,
            fill=white, minimum width=1.2cm, minimum height=1.2cm, 
            font=\bfseries\scriptsize, align=center,
            drop shadow={opacity=0.15, shadow xshift=1pt, shadow yshift=-1pt}
        },
        flow/.style={->, thick, draw=nips_dark!80},
        bus_line/.style={thick, rounded corners=4pt},
        dot/.style={circle, fill, inner sep=1.5pt},
        self_data/.style={->, thick, bend right=25, font=\tiny\bfseries},
        math_label/.style={midway, above, font=\tiny, text=gray!90, yshift=-1pt}
    ]

        \node[database, draw=nips_red, bottom color=nips_red!10] (RealA) at (0, 3.5) {Real\\$\mathcal{D}_1$};
        \node[factory, right=1.3cm of RealA] (FactA) {Factory\\A};
        \draw[flow] (RealA) -- node[math_label] {$x \sim p_1(x)$} (FactA);

        \node[decoder_mini, draw=nips_blue, right=3.0cm of FactA, yshift=0.3cm] (DecB_at_A) {$G_{\theta_2}$};
        \node[decoder_mini, draw=nips_green, below=0.1cm of DecB_at_A] (DecC_at_A) {$G_{\theta_3}$};
        \node[database, scale=0.85, right=0.5cm of DecB_at_A, yshift=-0.3cm, pattern=north east lines, pattern color=gray!40] (SynA) {\textbf{$\text{Union } \hat{\mathcal{D}}_A$}\\$Real \cup Syn$};
        \node[model_local, right=0.8cm of SynA] (ModelA) {Model\\$\textbf{w}_A$};
        \draw[flow] (SynA) -- (ModelA);
        
        \draw[->, dashed, gray] (DecB_at_A) -- (SynA);
        \draw[->, dashed, gray] (DecC_at_A) -- (SynA);
        \draw[self_data, draw=nips_red] (RealA.south east) to node[midway, below] {Self $\mathcal{D}_1$} (SynA.south west);

        \begin{scope}[on background layer]
            \node[silo, fit=(RealA) (ModelA), label={[text=nips_red, font=\bfseries\small, anchor=south west]north west:Client A (Class 1)}] {};
        \end{scope}

        \node[database, draw=nips_blue, bottom color=nips_blue!10] (RealB) at (0, 0) {Real\\$\mathcal{D}_2$};
        \node[factory, right=1.3cm of RealB] (FactB) {Factory\\B};
        \draw[flow] (RealB) -- node[math_label] {$x \sim p_2(x)$} (FactB);

        \node[decoder_mini, draw=nips_red, right=3.0cm of FactB, yshift=0.3cm] (DecA_at_B) {$G_{\theta_1}$};
        \node[decoder_mini, draw=nips_green, below=0.1cm of DecA_at_B] (DecC_at_B) {$G_{\theta_3}$};
        \node[database, scale=0.85, right=0.5cm of DecA_at_B, yshift=-0.3cm, pattern=north east lines, pattern color=gray!40] (SynB) {$\text{Union } \hat{\mathcal{D}}_B$\\$Real \cup Syn$};
        \node[model_local, right=0.8cm of SynB] (ModelB) {Model\\$\textbf{w}_B$};
        \draw[flow] (SynB) -- (ModelB);
        
        \draw[->, dashed, gray] (DecA_at_B) -- (SynB);
        \draw[->, dashed, gray] (DecC_at_B) -- (SynB);
        \draw[self_data, draw=nips_blue] (RealB.south east) to node[midway, below] {Self $\mathcal{D}_2$} (SynB.south west);

        \begin{scope}[on background layer]
            \node[silo, fit=(RealB) (ModelB), label={[text=nips_blue, font=\bfseries\small, anchor=south west]north west:Client B (Class 2)}] {};
        \end{scope}

        \node[database, draw=nips_green, bottom color=nips_green!10] (RealC) at (0, -3.5) {Real\\$\mathcal{D}_3$};
        \node[factory, right=1.3cm of RealC] (FactC) {Factory\\C};
        \draw[flow] (RealC) -- node[math_label] {$x \sim p_3(x)$} (FactC);

        \node[decoder_mini, draw=nips_red, right=3.0cm of FactC, yshift=0.3cm] (DecA_at_C) {$G_{\theta_1}$};
        \node[decoder_mini, draw=nips_blue, below=0.1cm of DecA_at_C] (DecB_at_C) {$G_{\theta_2}$};
        \node[database, scale=0.85, right=0.5cm of DecA_at_C, yshift=-0.3cm, pattern=north east lines, pattern color=gray!40] (SynC) {$\text{Union } \hat{\mathcal{D}}_C$\\$Real \cup Syn$};
        \node[model_local, right=0.8cm of SynC] (ModelC) {Model\\$\textbf{w}_C$};
        \draw[flow] (SynC) -- (ModelC);
        
        \draw[->, dashed, gray] (DecA_at_C) -- (SynC);
        \draw[->, dashed, gray] (DecB_at_C) -- (SynC);
        \draw[self_data, draw=nips_green] (RealC.south east) to node[midway, below] {Self $\mathcal{D}_3$} (SynC.south west);

        \begin{scope}[on background layer]
            \node[silo, fit=(RealC) (ModelC), label={[text=nips_green, font=\bfseries\small, anchor=south west]north west:Client C (Class 3)}] (SiloC) {};
        \end{scope}

        
        \draw[bus_line, nips_red] (FactA.east) -- node[above, font=\tiny\bfseries, text=nips_red, yshift=0.5pt] {Broadcast $\theta_1$} ++(2.1,0) coordinate(Aout) -- (Aout |- DecA_at_C.west);
        
        \node[dot, nips_red] at (Aout |- DecA_at_B.west) {};
        \draw[->, nips_red, thick] (Aout |- DecA_at_B.west) -- (DecA_at_B.west);
        
        \node[dot, nips_red] at (Aout |- DecA_at_C.west) {};
        \draw[->, nips_red, thick] (Aout |- DecA_at_C.west) -- (DecA_at_C.west);

        \draw[bus_line, nips_blue] (FactB.east) -- node[above, font=\tiny\bfseries, text=nips_blue, pos=0.45pt, yshift=0.5pt] {Broadcast $\theta_2$} ++(2.25,0) coordinate(Bout) -- (Bout |- DecB_at_C.west); 
        \draw[bus_line, nips_blue] (Bout) -- (Bout |- DecB_at_A.west); 
        
        \node[dot, nips_blue, yshift=-0.8] at (Bout) {};
        
        \node[dot, nips_blue] at (Bout |- DecB_at_A.west) {};
        \draw[->, nips_blue, thick] (Bout |- DecB_at_A.west) -- (DecB_at_A.west);
        
        \node[dot, nips_blue] at (Bout |- DecB_at_C.west) {};
        \draw[->, nips_blue, thick] (Bout |- DecB_at_C.west) -- (DecB_at_C.west);

        \draw[bus_line, nips_green] (FactC.east) -- node[above, font=\tiny\bfseries, text=nips_green, pos=0.43pt, yshift=0.5pt] {Broadcast $\theta_3$} ++(2.4,0) coordinate(Cout) -- (Cout |- DecC_at_A.west);
        
        \node[dot, nips_green] at (Cout |- DecC_at_B.west) {};
        \draw[->, nips_green, thick] (Cout |- DecC_at_B.west) -- (DecC_at_B.west);
        
        \node[dot, nips_green] at (Cout |- DecC_at_A.west) {};
        \draw[->, nips_green, thick] (Cout |- DecC_at_A.west) -- (DecC_at_A.west);

        \node[rectangle, rounded corners=6pt, draw=nips_dark, line width=1.2pt, fill=white, 
              minimum height=2.5cm, minimum width=2.5cm, align=center,
              drop shadow={opacity=0.2, shadow xshift=3pt}] (POE) at (14.2, 0) {\textbf{Product of}\\\textbf{Experts}};
              
        \draw[flow] (ModelA.east) to[out=0, in=165] (POE.west);
        \draw[flow] (ModelB.east) -- (POE.west);
        \draw[flow] (ModelC.east) to[out=0, in=195] (POE.west);

        \node[below=0.6cm of POE, font=\small\bfseries] (Inf) {Inference};
        \draw[->, very thick, nips_dark] (POE.south) -- (Inf);
        
        
        \node[anchor=north west, align=left, font=\scriptsize\color{gray}, inner xsep=0pt] 
    at ($(SiloC.south west) + (0, -0.15)$) 
    {(i) Local Training\\(e.g. EDM2 \cite{Karras2024EDM2})};

        \node[anchor=north west, align=right, font=\scriptsize\color{gray}, inner xsep=0pt] 
            at ($(SiloC.south west -| DecA_at_C.west) + (0.30, -0.15)$) 
            {(ii) Broadcast \quad (iii) Classifier Training\\(e.g. ResNet-50 \cite{He2016ResNet})  };

        \node[below=0.2cm of Inf, font=\scriptsize\color{gray}, align=center] 
            {(iv) Inference:\\$\hat{P}(y|x) \propto \prod P_k(y|x)$};

    \end{tikzpicture}
    }
    \caption{\textbf{Decentralized FederatedFactory Protocol.} This architecture depicts the P2P topology where local data flows ($x \sim p_k(x)$) are augmented with synthetic samples from broadcasted Factories to train local experts, aggregated via PoE.}
    \label{fig:decentralized_workflow}
\end{figure}
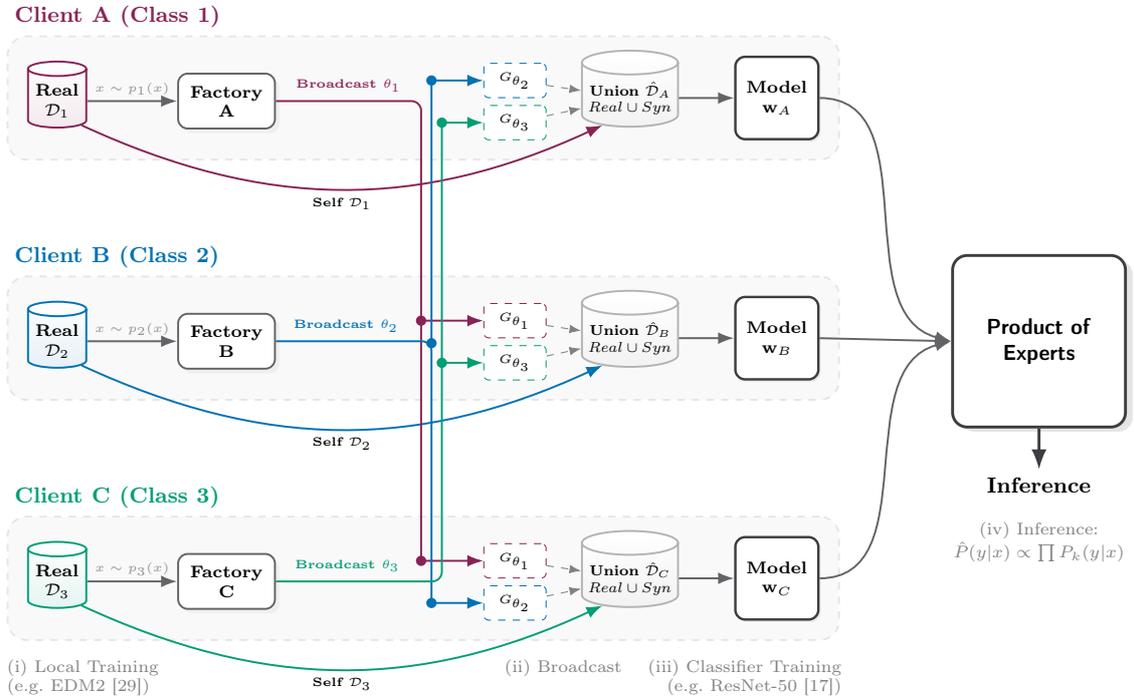


\subsection{Multi-Class Generalization}
\label{subsec:multi_class_generalization}

\textsc{FederatedFactory} natively extends to multi-class configurations. When clients possess data spanning multiple classes, each client $k \in \{1, \dots, K\}$ independently trains a class-specific Factory $G_{\boldsymbol{\theta}_k}(\mathbf{z}, c)$ for every class $c \in \mathcal{Y}_k$ present in its local dataset. To preserve the true global prior probability of the classes during centralized aggregation, the server requires the local sample counts $n_{c,k} = |\mathcal{D}_{c,k}|$. During global data synthesis, the server draws $m_{c,k}$ latent vectors for each specific Factory, strictly ensuring that the synthesized contribution is proportional to the local empirical density ($m_{c,k} \propto n_{c,k}$). 
To bypass the destructive non-convexity of parametric averaging, which frequently shifts mature weights into high-loss regions, the server instead performs \emph{data-space weighted aggregation}. For a target sample size $N_{\mathrm{target}}$ of class $c$, and clients $\mathcal{S}_c \subseteq \{1, \dots, K\}$ possessing this class, the server assigns a generation quota $Q_{c,k}$ to each local Factory $G_{\boldsymbol{\theta}_{c,k}}$:
$Q_{c,k} = N_{\mathrm{target}} \frac{n_{c,k}}{\sum_{j \in \mathcal{S}_c} n_{c,j}}$. 

This allocation ensures the synthetic global dataset $\hat{\mathcal{D}}_{\mathrm{syn}}$ mirrors its empirical sources. Large institutions establish the manifold's backbone, while smaller clinics inject stochastic diversity. This preserves rare morphological subtypes and scales the global distribution seamlessly, bypassing the interference of weight aggregation across imbalanced models.
The multi-class expansion organizes the aggregated models into a ``Visual Memory''~\cite{Geirhos2025Towards}, defining a \emph{Generative Matrix} $\boldsymbol{\Gamma} \in \mathcal{F}^{C \times K}$, with $C$ global classes, $K$ clients, and mapping space $\mathcal{F}$. Each entry $\boldsymbol{\Gamma}_{c,k}$ holds the localized Factory $G_{\boldsymbol{\theta}_{c,k}}$ (or $\emptyset$ if locally absent). Under a strict \emph{single-class silo} regime ($\alpha \to 0$), $\boldsymbol{\Gamma}$ collapses into a severely sparse diagonal, with each client contributing exactly one valid prior. Under general multi-class heterogeneity, it populates organically based on local label supports. 


\subsection{Modular Machine Unlearning and Exact Erasure}
\label{subsec:unlearning}

\begin{figure*}[t]
\centering
\resizebox{\textwidth}{!}{
\begin{tikzpicture}[>=stealth, font=\small, scale=0.8, every node/.style={transform shape}]
    
    \definecolor{basered}{RGB}{255, 235, 235}    
    \definecolor{darkred}{RGB}{136, 34, 85}    
    \definecolor{unlearnred}{RGB}{136, 34, 85}
    
    \newcommand{\genmatrix}[3]{
        \begin{scope}[shift={(#1,0)}]
            \node[font=\bfseries] at (1.1, 3.8) {#2};
            
            \foreach \x/\lbl in {0/k_1, 1.1/k_2, 2.2/k_3} {
                \node[font=\small\bfseries, text=gray] at (\x, 3.2) {$\lbl$};
            }
            
            \foreach \y/\lbl in {2.4/c_1, 1.3/c_2, 0.2/c_3} {
                \node[font=\small\bfseries, text=gray, anchor=east] at (-0.6, \y) {$\lbl$};
            }

            \foreach \x in {0, 1.1, 2.2} {
                \foreach \y in {0.2, 1.3, 2.4} {
                    
                    \def\isdeleted{0}

                    \ifnum #3=1 
                        \ifdim \x pt=1.1pt \def\isdeleted{1} \fi
                    \fi
                    \ifnum #3=2
                        \ifdim \y pt=1.3pt \def\isdeleted{1} \fi
                    \fi
                    \ifnum #3=3
                        \ifdim \x pt=1.1pt \ifdim \y pt=1.3pt \def\isdeleted{1} \fi \fi
                    \fi

                    \ifnum \isdeleted=1
                        \node[draw=unlearnred, fill=basered, thick, dashed, opacity=0.4, rounded corners=2pt, minimum width=0.95cm, minimum height=0.8cm] at (\x, \y) {};
                        \node[text=unlearnred, font=\tiny\bfseries, opacity=0.5] at (\x, \y) {$\emptyset$};
                    \else
                        \node[draw=gray!50, fill=white, rounded corners=2pt, minimum width=0.95cm, minimum height=0.8cm] at (\x, \y) {\tiny $G_{\boldsymbol{\theta}_{c,k}}$};
                    \fi
                }
            }
            
            \ifnum #3=1 
                \draw[unlearnred, line width=2pt, opacity=0.3] (1.1, 3.0) -- (1.1, -0.4);
            \fi
            \ifnum #3=2 
                \draw[unlearnred, line width=2pt, opacity=0.3] (-0.5, 1.3) -- (2.7, 1.3);
            \fi
            \ifnum #3=3 
                \draw[unlearnred, opacity=0.4] (1.1, 1.9) -- (1.1, 0.7);
                \draw[unlearnred, opacity=0.4] (0.5, 1.3) -- (1.7, 1.3);
            \fi
        \end{scope}
    }

    \genmatrix{0}{(1) Vertical}{1}
    \genmatrix{4.8}{(2) Horizontal}{2}
    \genmatrix{9.6}{(3) Targeted}{3}

    \node[anchor=north, align=center, font=\scriptsize, text=darkred] at (1.1, -0.6) {Client Removal\\$\boldsymbol{\Gamma}_{:, k} \leftarrow \emptyset$};
    \node[anchor=north, align=center, font=\scriptsize, text=darkred] at (5.9, -0.6) {Concept Erasure\\$\boldsymbol{\Gamma}_{c, :} \leftarrow \emptyset$};
    \node[anchor=north, align=center, font=\scriptsize, text=darkred] at (10.7, -0.6) {Specific Intersection\\$\boldsymbol{\Gamma}_{c, k} \leftarrow \emptyset$};

\end{tikzpicture}
}
\caption{\textbf{Modular Unlearning Modes in the Generative Matrix $\boldsymbol{\Gamma}$.} By structuring the global model as a disjoint union of class-client generators $G_{\boldsymbol{\theta}_{c,k}}$, \textsc{FederatedFactory} enables exact erasure without retraining the entire ensemble.}
\label{fig:modular_unlearning_modes}
\end{figure*}


Decoupling the generative process into $\boldsymbol{\Gamma}$ structurally allows for modular unlearning~\cite{Bourtoule2021SISA}. Standard \textsc{FL} densely entangles representations across global weights, making localized data deletion intractable and typically requiring complete retraining to satisfy right-to-be-forgotten mandates~\cite{Golatkar2020EternalSunshine}. 

In \textsc{FederatedFactory}, the global representation is a discrete union of parameter-independent modules. In the single-class silo regime, unlearning trivially requires excising the target client's generative prior: $\boldsymbol{\Theta}_{\mathrm{new}} = \boldsymbol{\Theta} \setminus \{G_{\boldsymbol{\theta}_k}\}$. The server then discards the associated synthetic samples and retrains the centralized classifier. In decentralized \textsc{P2P} setups, this is mirrored locally: peers delete the revoked prior $G_{\boldsymbol{\theta}_k}$, flush related synthetic data, and retrain experts. 

Extending this logic through $\boldsymbol{\Gamma}$ (Figure~\ref{fig:modular_unlearning_modes}) enables three granular modes of exact data erasure. \textbf{Vertical Unlearning (Client Removal):} The server nullifies a column ($\boldsymbol{\Gamma}_{:, k} \leftarrow \emptyset$) to comprehensively forget client $k$. \textbf{Horizontal Unlearning (Concept Erasure):} The server executes a row-wise deletion ($\boldsymbol{\Gamma}_{c, :} \leftarrow \emptyset$) to remove an obsolete or restricted class $c$ consortium-wide. \textbf{Targeted Unlearning (Specific Intersection):} The server zeroes an exact coordinate ($\boldsymbol{\Gamma}_{c, k} \leftarrow \emptyset$) to remove a specific class subset from a specific client.

Following any deletion, the server clears the invalidated synthetic buffer $\hat{\mathcal{D}}_{\mathrm{syn}}$ and retrains the centralized classifier. Because the target manifold's underlying generative prior is eradicated, we guarantee exact data removal ($\hat{\mathcal{D}}_{\mathrm{syn}} \cap \hat{\mathcal{M}}_{c,k} = \emptyset$). This achieves true exact unlearning without approximations~\cite{Bourtoule2021SISA, Golatkar2020EternalSunshine, Yan2022ARCANE}.

\section{Experiments and Results}
\label{sec:experiments}

\paragraph{\textbf{Experimental Setup.}}
\label{subsec:exp_setup}
To validate \textsc{FederatedFactory}, we evaluate across three dataset regimes: \textbf{(1)} \textsc{CIFAR-10}~\cite{Krizhevsky2009CIFAR} as a standard high-variance baseline for foundational stability; \textbf{(2)} three \textsc{MedMNIST}~\cite{Yang2023MedMNIST} subsets (\textsc{BloodMNIST}, \textsc{RetinaMNIST}, \textsc{PathMNIST}) to test diverse morphological heterogeneity; and \textbf{(3)} \textsc{ISIC2019}~\cite{Terrail2022FLamby} as a high-resolution stress test for rare dermatoscopic classes.  Across all configurations, we employ an adaptive ResNet-50~\cite{He2016ResNet} backbone trained via an identical SGD optimizer (batch size 128, weight decay $1\times 10^{-4}$, initial learning rate 0.1 with cosine annealing \cite{Loshchilov2017SGDR}) and domain-specific augmentations over equivalent computational budgets (300 centralized/synthetic epochs vs. 200 rounds of 5 local epochs for federated baselines). 
Following preliminary grid-search tuning, we report the mean and standard deviation of Test Accuracy and macro-averaged One-vs-Rest (OvR) AUROC across five independent random seeds for all configurations. The centralized upper bound is explicitly trained on the theoretically inaccessible global dataset $\mathcal{D}_{\mathrm{union}}$.

\paragraph{\textbf{Results.}}
\label{subsec:performance_collapse}

Table~\ref{tab:master_performance} and Figure~\ref{fig:federatedfactory_merged_improvement} quantify the transition from moderate heterogeneity (Dirichlet $\alpha=0.1$) to the \textit{Single-Class Silo} extreme. Under moderate skew, \textsc{FedProx}~\cite{Li2020FedProx} and \textsc{SCAFFOLD}~\cite{Karimireddy2020SCAFFOLD} maintain convergence. However, in strictly disjoint silos, parameter aggregation degenerates into a random walk, causing unconditional baseline failure. \textsc{FedAvg} on \textsc{CIFAR} plunges from 89.76\% $\to$ 11.36\% accuracy, while on imbalanced sets like \textsc{ISIC2019}, global models collapse to majority-class prediction (e.g., \textsc{FedProx} drops from 62.83\% $\to$ 48.22\%). 
Conversely, \textsc{FederatedFactory} avoids this failure, as demonstrated on \textsc{RetinaMNIST} (49.30\% vs. 47.20\% Centralized Accuracy) and \textsc{ISIC2019} (90.57\% vs. 90.37\% Centralized AUROC).

\definecolor{colorFedAvg}{HTML}{F0E442}
\definecolor{colorFedDyn}{HTML}{E69F00}
\definecolor{colorFedProx}{HTML}{D55E00}
\definecolor{colorScaffold}{HTML}{882255}
\definecolor{colorFedFactC}{HTML}{56B4E9}
\definecolor{colorFedFactD}{HTML}{0072B2}
\definecolor{colorUpperBound}{HTML}{009E73}

\definecolor{colorFedAvg}{HTML}{F0E442}
\definecolor{colorFedDyn}{HTML}{E69F00}
\definecolor{colorFedProx}{HTML}{D55E00}
\definecolor{colorScaffold}{HTML}{882255}
\definecolor{colorFederatedFactoryCentralized}{HTML}{56B4E9}
\definecolor{colorFederatedFactoryDecentralized}{HTML}{0072B2}
\definecolor{colorUpperBound}{HTML}{009E73}

\begin{table}[t]
    \centering
    \setlength{\tabcolsep}{3pt}
    \caption{\textbf{Robustness to Extreme Statistical Heterogeneity.} Mean $\pm$ SD of Accuracy and AUROC (\%) under moderate ($\alpha=0.1$) and pathological ($\alpha \rightarrow 0$) label skew. Best overall performance (\textcolor{acadred}{Red}). \textbf{Bold} indicates the best FL method.}
    \label{tab:master_performance}
    
    \resizebox{\textwidth}{!}{%
    \begin{tabular}{l cccc cccc cccc}
        \toprule
        \multirow{3}{*}{\textbf{Method}} & \multicolumn{4}{c}{\textbf{CIFAR}} & \multicolumn{4}{c}{\textbf{BloodMNIST}} & \multicolumn{4}{c}{\textbf{PathMNIST}} \\
        \cmidrule(lr){2-5} \cmidrule(lr){6-9} \cmidrule(lr){10-13}
        & \multicolumn{2}{c}{\textbf{Dirichlet}} & \multicolumn{2}{c}{\textbf{Silos}} 
        & \multicolumn{2}{c}{\textbf{Dirichlet}} & \multicolumn{2}{c}{\textbf{Silos}} 
        & \multicolumn{2}{c}{\textbf{Dirichlet}} & \multicolumn{2}{c}{\textbf{Silos}} \\
        \cmidrule(lr){2-3} \cmidrule(lr){4-5} \cmidrule(lr){6-7} \cmidrule(lr){8-9} \cmidrule(lr){10-11} \cmidrule(lr){12-13}
        & Acc & AUC & Acc & AUC & Acc & AUC & Acc & AUC & Acc & AUC & Acc & AUC \\
        \midrule
        
        \rowcolor{colorFedAvg!20} 
        \textbf{FedAvg \cite{McMahan2017FedAvg}} 
        & \textbf{89.76 $\pm$ 1.12} & \textbf{99.22 $\pm$ 0.19} & 11.36 $\pm$ 1.28 & 50.91 $\pm$ 1.79 
        & 83.46 $\pm$ 5.18 & 97.57 $\pm$ 1.11 & 21.88 $\pm$ 5.39 & 55.23 $\pm$ 7.17 
        & 73.79 $\pm$ 16.86 & 90.11 $\pm$ 16.05 & 18.05 $\pm$ 0.80 & 50.12 $\pm$ 1.76 \\
        
        \rowcolor{colorFedDyn!20} 
        \textbf{FedDyn \cite{Acar2021FedDyn}} 
        & 61.82 $\pm$ 11.47 & 94.87 $\pm$ 2.65 & 10.12 $\pm$ 0.24 & 51.48 $\pm$ 0.83 
        & 64.93 $\pm$ 8.63 & 92.02 $\pm$ 2.21 & 19.47 $\pm$ 0.00 & 51.44 $\pm$ 2.87 
        & 60.99 $\pm$ 8.66 & 92.71 $\pm$ 4.49 & 17.76 $\pm$ 0.80 & 47.57 $\pm$ 3.22 \\
        
        \rowcolor{colorFedProx!20} 
        \textbf{FedProx \cite{Li2020FedProx}} 
        & 89.67 $\pm$ 1.15 & 99.21 $\pm$ 0.19 & 18.08 $\pm$ 1.18 & 67.06 $\pm$ 1.02 
        & \textbf{84.11 $\pm$ 5.38} & \textbf{97.61 $\pm$ 1.00} & 20.18 $\pm$ 1.57 & 53.69 $\pm$ 3.32 
        & \textbf{77.09 $\pm$ 11.89} & \textbf{96.01 $\pm$ 4.44} & 18.93 $\pm$ 0.57 & 52.97 $\pm$ 2.67 \\
        
        \rowcolor{colorScaffold!20} 
        \textbf{Scaffold \cite{Karimireddy2020SCAFFOLD}} 
        & 79.26 $\pm$ 5.10 & 97.73 $\pm$ 0.86 & 12.99 $\pm$ 0.75 & 54.83 $\pm$ 1.08 
        & 82.31 $\pm$ 2.97 & 97.51 $\pm$ 0.75 & 22.60 $\pm$ 1.96 & 67.84 $\pm$ 2.98 
        & 71.88 $\pm$ 9.70 & 95.61 $\pm$ 1.88 & 31.15 $\pm$ 2.10 & 64.87 $\pm$ 2.79 \\
        
        \midrule
        
        \rowcolor{colorFederatedFactoryCentralized!20} 
        \textsc{FedFact} (Cent.) 
        & -- & -- & 84.30 $\pm$ 0.66 & 98.24 $\pm$ 0.13 
        & -- & -- & \textbf{91.17 $\pm$ 0.26} & \textbf{99.04 $\pm$ 0.06} 
        & -- & -- & \textbf{67.94 $\pm$ 3.84} & \textbf{93.60 $\pm$ 1.07} \\
        
        \rowcolor{colorFederatedFactoryDecentralized!20} 
        \textsc{FedFact} (P2P) 
        & -- & -- & \textbf{90.57 $\pm$ 0.09} & \textbf{99.14 $\pm$ 0.02} 
        & -- & -- & 86.38 $\pm$ 0.31 & 98.36 $\pm$ 0.07 
        & -- & -- & 67.03 $\pm$ 2.28 & 91.48 $\pm$ 0.64 \\
        
        \midrule
        
        \rowcolor{colorUpperBound!15}
        Centralized Upper Bound
        & \textit{\textcolor{acadred}{94.69 $\pm$ 0.33}} & \textit{\textcolor{acadred}{99.75 $\pm$ 0.03}} & \textit{\textcolor{acadred}{94.69 $\pm$ 0.33}} & \textit{\textcolor{acadred}{99.75 $\pm$ 0.03}} 
        & \textit{\textcolor{acadred}{91.23 $\pm$ 0.87}} & \textit{\textcolor{acadred}{99.18 $\pm$ 0.13}} & \textit{\textcolor{acadred}{91.23 $\pm$ 0.87}} & \textit{\textcolor{acadred}{99.18 $\pm$ 0.13}} 
        & \textit{\textcolor{acadred}{84.82 $\pm$ 0.81}} & \textit{\textcolor{acadred}{96.96 $\pm$ 0.40}} & \textit{\textcolor{acadred}{84.82 $\pm$ 0.81}} & \textit{\textcolor{acadred}{96.96 $\pm$ 0.40}} \\
        \bottomrule
    \end{tabular}%
    }
    
    \vspace{0.4cm}
    
    \resizebox{0.80\textwidth}{!}{%
    \begin{tabular}{l cccc cccc}
        \toprule
        \multirow{3}{*}{\textbf{Method}} & \multicolumn{4}{c}{\textbf{RetinaMNIST}} & \multicolumn{4}{c}{\textbf{ISIC2019}} \\
        \cmidrule(lr){2-5} \cmidrule(lr){6-9}
        & \multicolumn{2}{c}{\textbf{Dirichlet}} & \multicolumn{2}{c}{\textbf{Silos}} 
        & \multicolumn{2}{c}{\textbf{Dirichlet}} & \multicolumn{2}{c}{\textbf{Silos}} \\
        \cmidrule(lr){2-3} \cmidrule(lr){4-5} \cmidrule(lr){6-7} \cmidrule(lr){8-9}
        & Acc & AUC & Acc & AUC & Acc & AUC & Acc & AUC \\
        \midrule
        
        \rowcolor{colorFedAvg!20} 
        \textbf{FedAvg \cite{McMahan2017FedAvg}} 
        & 45.20 $\pm$ 3.80 & \textbf{55.02 $\pm$ 7.96} & 43.50 $\pm$ 0.00 & 48.70 $\pm$ 1.90 
        & 60.35 $\pm$ 6.94 & 76.73 $\pm$ 12.33 & 48.22 $\pm$ 0.00 & 47.31 $\pm$ 1.78 \\
        
        \rowcolor{colorFedDyn!20} 
        \textbf{FedDyn \cite{Acar2021FedDyn}} 
        & 45.25 $\pm$ 3.91 & 54.02 $\pm$ 8.87 & 43.50 $\pm$ 0.00 & 48.70 $\pm$ 1.92 
        & 53.63 $\pm$ 5.48 & 68.48 $\pm$ 7.81 & 48.22 $\pm$ 0.00 & 43.28 $\pm$ 0.99 \\
        
        \rowcolor{colorFedProx!20} 
        \textbf{FedProx \cite{Li2020FedProx}} 
        & \textbf{45.45 $\pm$ 4.36} & 54.52 $\pm$ 8.70 & 43.50 $\pm$ 0.00 & 48.74 $\pm$ 1.91 
        & \textbf{62.83 $\pm$ 2.41} & \textbf{83.21 $\pm$ 1.60} & 48.22 $\pm$ 0.00 & 45.49 $\pm$ 0.78 \\
        
        \rowcolor{colorScaffold!20} 
        \textbf{Scaffold \cite{Karimireddy2020SCAFFOLD}} 
        & 43.50 $\pm$ 0.00 & 52.52 $\pm$ 6.15 & 43.50 $\pm$ 0.00 & 48.70 $\pm$ 1.90 
        & 50.03 $\pm$ 1.75 & 64.37 $\pm$ 6.19 & 48.22 $\pm$ 0.00 & 44.33 $\pm$ 1.02 \\
        
        \midrule
        
        \rowcolor{colorFederatedFactoryCentralized!20} 
        \textsc{FedFact} (Cent.) 
        & -- & -- & 46.75 $\pm$ 1.29 & 70.29 $\pm$ 1.06 
        & -- & -- & 62.08 $\pm$ 0.59 & 84.98 $\pm$ 0.57 \\
        
        \rowcolor{colorFederatedFactoryDecentralized!20} 
        \textsc{FedFact} (P2P) 
        & -- & -- & \textbf{\textcolor{acadred}{49.30 $\pm$ 1.25}} & \textbf{\textcolor{acadred}{71.79 $\pm$ 0.64}} 
        & -- & -- & \textbf{69.94 $\pm$ 0.46} & \textbf{\textcolor{acadred}{90.57 $\pm$ 0.08}} \\
        
        \midrule
        
        \rowcolor{colorUpperBound!15}
        Centralized Upper Bound
        & \textit{47.20 $\pm$ 0.78} & \textit{69.69 $\pm$ 1.00} & \textit{47.20 $\pm$ 0.78} & \textit{69.69 $\pm$ 1.00} 
        & \textit{\textcolor{acadred}{70.38 $\pm$ 0.69}} & \textit{90.37 $\pm$ 0.74} & \textit{\textcolor{acadred}{70.38 $\pm$ 0.69}} & \textit{90.37 $\pm$ 0.74} \\
        \bottomrule
    \end{tabular}%
    }
\end{table}

\begin{figure}[t]
    \centering
    \includegraphics[width=\textwidth]{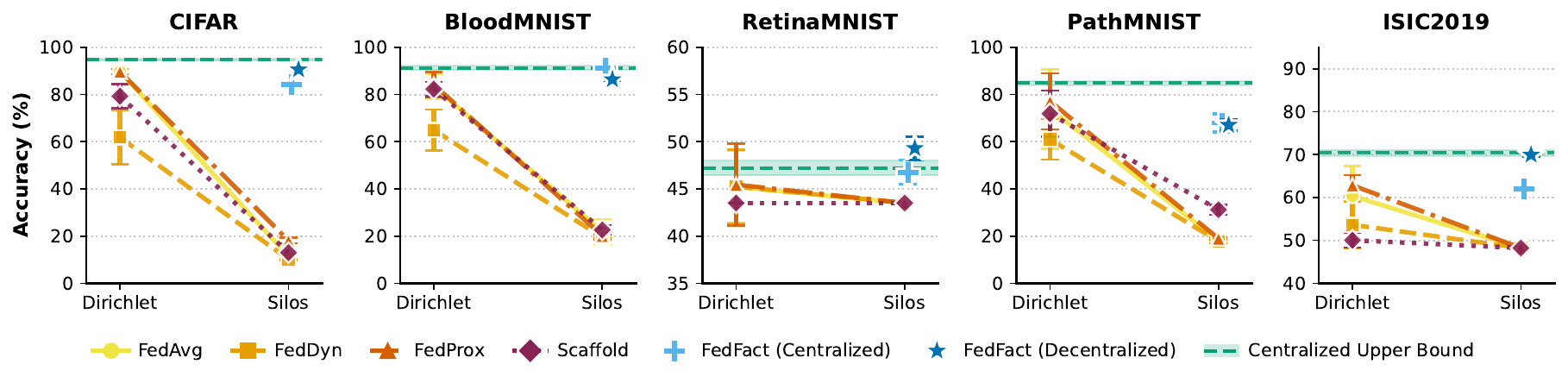}

    
    \caption{\textbf{Results in Pathological Heterogeneity.} While standard baselines (\legendbox{colorFedAvg!20}{\textbf{FedAvg $\circ$}}, \legendbox{colorFedDyn!20}{\textbf{FedDyn $\square$}}, \legendbox{colorFedProx!20}{\textbf{FedProx $\triangle$}}, \legendbox{colorScaffold!20}{\textbf{Scaffold $\diamond$}}) collapse as we move from moderate skew ($\alpha=0.1$) to extreme silos ($\alpha \rightarrow 0$), \textsc{FederatedFactory} matches the \legendbox{colorUpperBound!20}{\textbf{Upper Bound (---)}} in both \legendbox{colorFedFactC!20}{\textbf{Centralized\ ($+$)}}, \legendbox{colorFedFactD!20}{\textbf{Decentralized\ ($\star$)}} configurations.}
    \label{fig:federatedfactory_merged_improvement}
\end{figure}

\begin{takeaway}
\textsc{FederatedFactory} fixes the \textit{Single-Class Silo} collapse, and it also manages to match the centralized data-pooled upper-bound baseline, under more difficult constraints.
\end{takeaway}

\paragraph{\textbf{Communication-Computation Anaysis.}}
\label{subsec:flops_vs_mbs}
Table~\ref{tab:combined_evaluation} highlights \textsc{FederatedFactory}'s shift from a bandwidth-bound to a compute-bound regime. Traditional FL baselines minimize edge compute but incur communication penalties from iterative convergence. Bidirectional parameter transmissions across 200 rounds create severe bottlenecks, demanding hundreds of gigabytes (e.g., 358,899.6 MB on \textsc{CIFAR-10}), a vulnerability \textsc{SCAFFOLD} strictly doubles for control variates.
Conversely, \textsc{FederatedFactory} employs a OSFL protocol, collapsing network exposure and achieving a 99.4\% communication reduction (1,934.1 MB on \textsc{CIFAR-10}). This zero-dependency design incurs a ``generative tax'' in local FLOPs, scaling computational load from $\sim 10^{17}$ in baselines to $\sim 10^{18}$ (up to $\sim 10^{20}$ for \textsc{ISIC2019}). Ultimately, trading computational overhead for reduced communication costs is justified in cross-silo clinical settings, where model trustworthiness takes precedence over computational expense.

\begin{table}[t]
    \centering
    \setlength{\tabcolsep}{3pt}
    \caption{\textbf{Resource Trade-off Analysis.} Total computational overhead (FLOPs) vs. communication volume (MBs) across the five benchmarks.}
    \label{tab:combined_evaluation}
    \resizebox{\textwidth}{!}{%
    \begin{tabular}{l cccccccccc}
        \toprule
        \multirow{2}{*}{\textbf{Method}} 
        & \multicolumn{2}{c}{\textbf{CIFAR}} 
        & \multicolumn{2}{c}{\textbf{BloodMNIST}} 
        & \multicolumn{2}{c}{\textbf{PathMNIST}} 
        & \multicolumn{2}{c}{\textbf{RetinaMNIST}} 
        & \multicolumn{2}{c}{\textbf{ISIC2019}} \\
        \cmidrule(lr){2-3} \cmidrule(lr){4-5} \cmidrule(lr){6-7} \cmidrule(lr){8-9} \cmidrule(lr){10-11}
        & \textbf{FLOPs} & \textbf{MBs} & \textbf{FLOPs} & \textbf{MBs} & \textbf{FLOPs} & \textbf{MBs} & \textbf{FLOPs} & \textbf{MBs} & \textbf{FLOPs} & \textbf{MBs} \\
        \midrule
        
        \rowcolor{colorFedAvg!20} 
        \textbf{FedAvg \cite{McMahan2017FedAvg}} 
        & $\mathbf{1.95 \times 10^{17}}$ & 358,899.6 
        & $\mathbf{3.57 \times 10^{16}}$ & 287,069.6 
        & $\mathbf{2.69 \times 10^{17}}$ & 322,981.5 
        & $\mathbf{3.22 \times 10^{15}}$ & 179,371.6 
        & $\mathbf{3.11 \times 10^{17}}$ & 287,163.4 \\
        
        \rowcolor{colorFedDyn!20} 
        \textbf{FedDyn \cite{Acar2021FedDyn}} 
        & $\mathbf{1.95 \times 10^{17}}$ & 358,899.6 
        & $\mathbf{3.57 \times 10^{16}}$ & 287,069.6 
        & $\mathbf{2.69 \times 10^{17}}$ & 322,981.5 
        & $\mathbf{3.22 \times 10^{15}}$ & 179,371.6 
        & $\mathbf{3.11 \times 10^{17}}$ & 287,163.4 \\
        
        \rowcolor{colorFedProx!20} 
        \textbf{FedProx \cite{Li2020FedProx}} 
        & $\mathbf{1.95 \times 10^{17}}$ & 358,899.6 
        & $\mathbf{3.57 \times 10^{16}}$ & 287,069.6 
        & $\mathbf{2.69 \times 10^{17}}$ & 322,981.5 
        & $\mathbf{3.22 \times 10^{15}}$ & 179,371.6 
        & $\mathbf{3.11 \times 10^{17}}$ & 287,163.4 \\
        
        \rowcolor{colorScaffold!20} 
        \textbf{Scaffold \cite{Karimireddy2020SCAFFOLD}} 
        & $\mathbf{1.95 \times 10^{17}}$ & 717,799.1 
        & $\mathbf{3.57 \times 10^{16}}$ & 574,139.3 
        & $\mathbf{2.69 \times 10^{17}}$ & 645,962.9 
        & $\mathbf{3.22 \times 10^{15}}$ & 358,743.2 
        & $\mathbf{3.11 \times 10^{17}}$ & 574,326.8 \\
        
        \midrule
        
        \rowcolor{colorFederatedFactoryCentralized!20} 
        \textsc{FedFact} (Server) 
        & $7.30 \times 10^{18}$ & \textbf{1,934.1} 
        & $4.47 \times 10^{18}$ & \textbf{1,547.3} 
        & $5.03 \times 10^{18}$ & \textbf{1,740.7} 
        & $2.79 \times 10^{18}$ & \textbf{967.0} 
        & $2.52 \times 10^{20}$ & \textbf{1,435.3} \\
        
        \rowcolor{colorFederatedFactoryDecentralized!20} 
        \textsc{FedFact} (Local) 
        & $8.35 \times 10^{18}$ & 19,340.6 
        & $4.97 \times 10^{18}$ & 12,378.0 
        & $5.67 \times 10^{18}$ & 15,665.9 
        & $2.97 \times 10^{18}$ & 4,835.2 
        & $2.54 \times 10^{20}$ & 11,482.0 \\
        \bottomrule
    \end{tabular}%
    }
\end{table}

\begin{figure}[t]
    \centering
    \includegraphics[width=\textwidth]{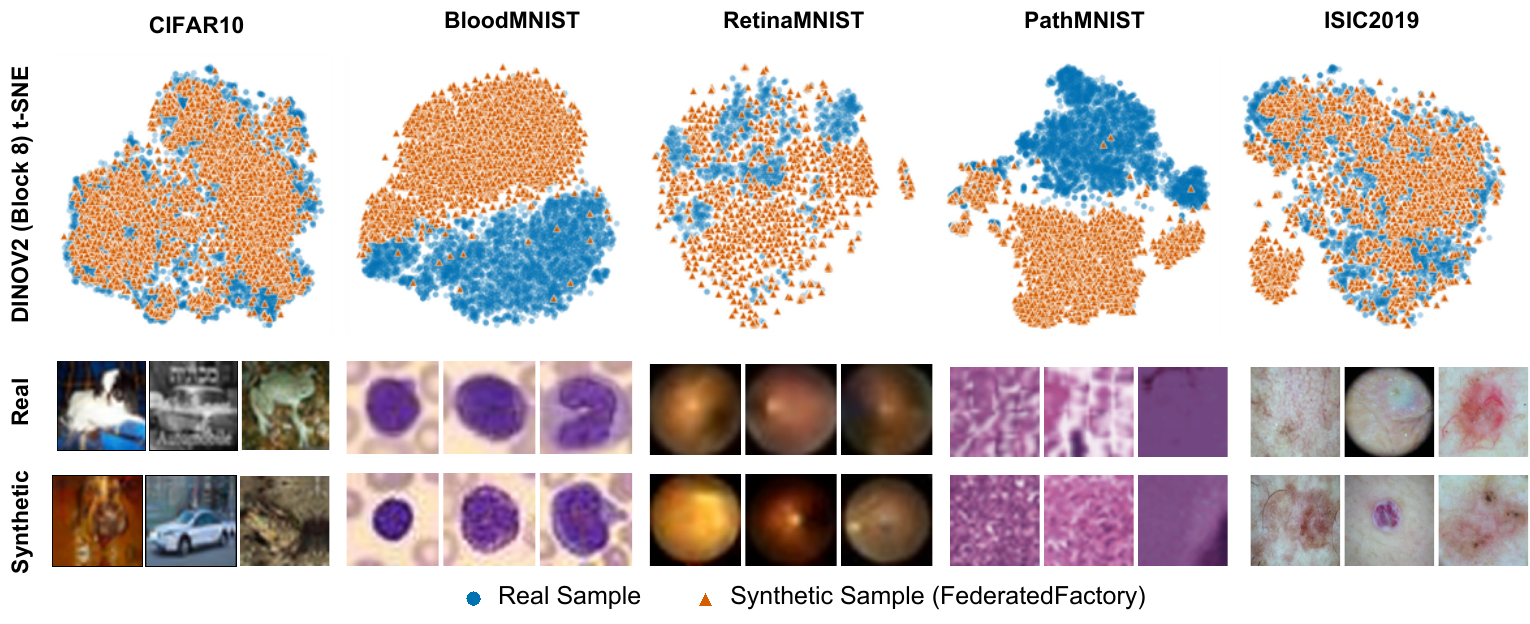}
    \caption{\textbf{Real vs. Synthetic Data in Representation Space.} \textbf{Top:} DINOv2 (Block 8) feature space~\cite{Oquab2023DINOv2}. FederatedFactory ($\blacktriangle$) represents the true target manifolds data (\textbullet), preserving good performance. \textbf{Bottom:} Visual comparisons across datasets. Columns pair real and generated samples of the same class, demonstrating accurate morphological preservation without memorization.}
    \label{fig:tsne_embeddings}
\end{figure}

\paragraph{\textbf{Qualitative Analysis and Manifold Alignment.}}
\label{subsec:qualitative_analysis}

To verify that Factories correctly approximate the localized manifolds without memorization~~\cite{Bonnaire2025Diffusion}, we project the generated distribution $\hat{\mathcal{D}}_{\mathrm{syn}}$ and the true target distribution $\mathcal{D}_{\mathrm{union}}$ into a joint feature space, presented alongside qualitative samples in Figure~\ref{fig:tsne_embeddings}. Visual comparisons (Fig.~\ref{fig:tsne_embeddings}, Top) demonstrate \textsc{FederatedFactory} successfully synthesizes complex morphologies (e.g., \textsc{ISIC2019} textures, \textsc{BloodMNIST} cells) \textit{ex nihilo}. To validate this structural fidelity, we project intermediate DINOv2 representations (Block 8) \cite{Oquab2023DINOv2} via t-SNE (Fig.~\ref{fig:tsne_embeddings}, Bottom), a feature space proven effective for isolating generative artifacts~\cite{Interno2026AiGenerated}. The synthetic priors natively span the true empirical distribution without discrete coordinate overlaps. While not a formal privacy guarantee, this empirically confirms continuous diversity and functional manifold mapping over trivial memorization.

\section{Conclusion}
As federated learning (FL) scales \cite{Kairouz2021Advances}, resolving extreme statistical heterogeneity is critical \cite{Hsu2019Dirichlet, Zhao2018NonIID}. To bypass the collapse of standard parameter aggregation under single-class silos \cite{McMahan2017FedAvg}, \textsc{FederatedFactory} transfers localized generative priors rather than discriminative gradients. By using independent generative modules to synthesize class-balanced global datasets \textit{ex nihilo}~\cite{Ho2020DDPM, Karras2022EDM}, our zero-dependency framework eliminates gradient conflict and external prior bias \cite{Huix2024Are, He2026Ai}. This consistently recovers centralized upper-bound performance where standard methods fail. For instance, under strictly disjoint silos, it lifts CIFAR-10 accuracy from a collapsed $11.36\%$ (FedAvg) to $90.57\%$ (matching the centralized bound), and improves ISIC2019 AUROC from $47.31\%$ to $90.57\%$ \cite{Terrail2022FLamby}. Remarkably, it achieves this while slashing communication overhead by $99.4\%$ (from $358,899.6$ MB to just $1,934.1$ MB on CIFAR-10) \cite{McMahan2017FedAvg}.

While traditional FL averages local trajectories, our results prove that transferring the underlying data manifold approximation is fundamentally more robust for disjoint label supports. This paradigm shift encourages transmitting localized generative models over fragile discriminative boundaries. Furthermore, structuring the global model as a discrete union of these generative modules inherently facilitates exact modular unlearning \cite{Bourtoule2021SISA, Yan2022ARCANE}. Excising a client's specific generative parameters guarantees exact data erasure, effortlessly complying with right-to-be-forgotten mandates without retraining the global ensemble \cite{Golatkar2020EternalSunshine}.

\textbf{Limitations.} Trading communication for compute bottlenecks in cross-silo settings introduces a substantial ``generative tax.'' Hardware profiling shows local computational loads scale by roughly an order of magnitude (e.g., $1.95 \times 10^{17}$ to $8.35 \times 10^{18}$ FLOPs on CIFAR-10). Furthermore, theoretical convergence assumes local diffusion models converge without mere memorization \cite{Bonnaire2025Diffusion}. While feature space projections empirically confirm continuous diversity, the framework lacks formal privacy guarantees (e.g., Differential Privacy)~\cite{0400000042} against advanced data extraction~\cite{3620531} or membership inference attacks on transmitted priors~\cite{hu2023membershipinferencediffusionmodels}.

\textbf{Broader Impacts.} Standard FL fragility endangers multi-institutional collaborations, especially in clinical imaging where data sovereignty is fundamental \cite{Shelleretal2020, Riekeetal2020}. \textsc{FederatedFactory} provides a robust, zero-dependency alternative. Moreover, organizing the global model as a discrete Generative Matrix inherently supports exact modular unlearning \cite{Bourtoule2021SISA, Yan2022ARCANE}, ensuring compliance with stringent data privacy regulations like the Right to be Forgotten and strictly protecting localized data rights \cite{Golatkar2020EternalSunshine}.

\bibliographystyle{splncs04}
\bibliography{main}
\newpage

\appendix

\section*{Appendix Contents}
\vspace{0.5em}
\noindent\rule{\linewidth}{0.8pt} 
\vspace{1.2em}

\newcommand{\tocsection}[2]{%
    \noindent\hyperref[#1]{\textbf{\ref*{#1}}\quad#2}\hfill\pageref{#1}\par\vspace{3pt}}
\newcommand{\tocsubsection}[2]{%
    \noindent\hspace{2em}\hyperref[#1]{\ref*{#1}\quad#2}\ \textcolor{black!50}{\dotfill}\ \pageref{#1}\par}

\tocsection{app:pseudocode}{FederatedFactory Pseudocode}
\tocsection{app:qualitative}{Qualitative Examples of Diffusion Generated Images}
\tocsection{app:ecdf}{Empirical Cumulative Distribution Function and t-SNE Analysis}
\tocsection{app:performance}{Additional FederatedFactory Performance Analysis}
\tocsection{app:cost}{Additional FederatedFactory Cost Analysis}
\tocsection{app:image_generation_metrics}{Image Generation Metrics (FID, KID)}
\tocsection{app:computational_environments}{Computational Environments}
\tocsection{app:datasets_and_sources}{Datasets and Sources}

\vspace{1em}
\noindent\rule{\linewidth}{0.4pt} 
\vspace{2em}

\section{FederatedFactory Pseudocode}
\label{app:pseudocode}

For reproducibility, we formalize the complete \textsc{FederatedFactory} operational protocol in Algorithm~\ref{alg:fedfact_unified}. The formulation unites both the Centralized and Decentralized architectures into a single execution graph, controlled by the structural indicator $\mathcal{T}$. 


\begin{algorithm}[htpb]
\caption{\textsc{FederatedFactory}: Unified Global Optimization Protocol}
\label{alg:fedfact_unified}
\begin{algorithmic}[1]
\Require Network of $K$ clients, private local datasets $\mathcal{D}_k$, corresponding local label supports $\mathcal{Y}_k$.
\Require Architecture $\mathcal{T} \in \{\textsc{Centralized}, \textsc{Decentralized}\}$.
\Require Generative epochs $E_{\mathrm{gen}}$, Discriminative epochs $E_{\mathrm{disc}}$, Generation quotas $Q_k$.
\Require Untrained generative architectures $\{G_{\boldsymbol{\theta}_k}\}_{k=1}^K$.

\vspace{0.1cm}
\Statex \textbf{Phase I: Local Generative Prior Optimization (Asynchronous)}
\For{each client $k \in \{1, \dots, K\}$ \textbf{in parallel}}
    \For{$e = 1$ \textbf{to} $E_{\mathrm{gen}}$}
        \State Sample local empirical batch $\mathbf{x} \sim \mathcal{D}_k$
        \State Compute diffusion objective: $\mathcal{L}_{\mathrm{ELBO}}(\boldsymbol{\theta}_k; \mathbf{x})$ \Comment{Minimizes $\mathrm{KL}(p_k \parallel p_{\boldsymbol{\theta}_k})$ bounded by $\epsilon_k$}
        \State Update local generative prior: $\boldsymbol{\theta}_k \leftarrow \boldsymbol{\theta}_k - \eta \nabla_{\boldsymbol{\theta}_k} \mathcal{L}_{\mathrm{ELBO}}$
    \EndFor
\EndFor

\vspace{0.1cm}
\Statex \textbf{Phase II: Conditional Architectural Execution \& Discriminative Training}
\If{$\mathcal{T} == \textsc{Centralized}$} \Comment{Trusted Aggregator Available}
    \State Clients transmit optimized parameters $\boldsymbol{\theta}_k$ to the Server \Comment{Satisfies $\mathcal{C}_{\mathrm{rounds}} = 1$}
    \State Server initializes empty global synthetic dataset: $\hat{\mathcal{D}}_{\mathrm{syn}} \leftarrow \emptyset$
    \For{each received model $G_{\boldsymbol{\theta}_k} \in \boldsymbol{\Theta}$}
        \For{$i = 1$ \textbf{to} $Q_k$}
            \State Sample latent noise vector $\mathbf{z} \sim \mathcal{N}(\mathbf{0}, \mathbf{I})$
            \State Synthesize counterfactual mapping $\hat{\mathbf{x}} \leftarrow G_{\boldsymbol{\theta}_k}(\mathbf{z})$
            \State $\hat{\mathcal{D}}_{\mathrm{syn}} \leftarrow \hat{\mathcal{D}}_{\mathrm{syn}} \cup \{(\hat{\mathbf{x}}, y_k)\}$ \Comment{\textit{Ex nihilo} synthesis without external \textsc{FM}}
        \EndFor
    \EndFor
    \State Server initializes centralized classifier $\mathbf{w}$
    \State Optimize $\mathbf{w}$ on $\hat{\mathcal{D}}_{\mathrm{syn}}$ for $E_{\mathrm{disc}}$ epochs using standard empirical risk minimization
\ElsIf{$\mathcal{T} == \textsc{Decentralized}$} \Comment{Trustless Decentralized Mesh}
    \For{each client $k \in \{1, \dots, K\}$ \textbf{in parallel}}
        \State Broadcast $\boldsymbol{\theta}_k$ to all valid peers $j \in \{1, \dots, K\} \setminus \{k\}$
        \State Receive complement priors $\boldsymbol{\Theta}_{\setminus k} = \{ \boldsymbol{\theta}_j \}_{j \neq k}$
        \State Initialize local hybrid dataset: $\mathcal{D}_k^{\mathrm{mix}} \leftarrow \mathcal{D}_k$
        \For{each received complement model $G_{\boldsymbol{\theta}_j} \in \boldsymbol{\Theta}_{\setminus k}$}
            \State Generate $Q_j$ samples from $G_{\boldsymbol{\theta}_j}(\mathbf{z})$ and append mappings to $\mathcal{D}_k^{\mathrm{mix}}$
        \EndFor
        \State Initialize local expert classifier $\mathbf{w}_k$
        \State Optimize $\mathbf{w}_k$ exclusively on $\mathcal{D}_k^{\mathrm{mix}}$ for $E_{\mathrm{disc}}$ epochs
    \EndFor
\EndIf

\vspace{0.1cm}
\Statex \textbf{Phase III: Distributed Global Inference}
\Procedure{Inference}{Target sample $\mathbf{x}_{\mathrm{target}}$}
    \If{$\mathcal{T} == \textsc{Centralized}$}
        \State \Return $p(y \mid \mathbf{x}_{\mathrm{target}} ; \mathbf{w})$
    \ElsIf{$\mathcal{T} == \textsc{Decentralized}$}
        \State Gather local unnormalized probabilities: $\mathbf{p}_k = f_{\mathbf{w}_k}(\mathbf{x}_{\mathrm{target}}), \quad \forall k \in \{1, \dots, K\}$
        \State Compute joint consensus: $p_{\mathrm{joint}}(y) = \prod_{k=1}^K p_k(y \mid \mathbf{x}_{\mathrm{target}})$
        \State Compute partition function: $Z = \sum_{y'} p_{\mathrm{joint}}(y')$
        \State \Return Renormalized Product of Experts: $p_{\mathrm{PoE}}(y \mid \mathbf{x}_{\mathrm{target}}) = \frac{1}{Z} p_{\mathrm{joint}}(y)$
    \EndIf
\EndProcedure
\end{algorithmic}
\end{algorithm}

\section{Qualitative Examples of Diffusion Generated Images}
\label{app:qualitative}

To visually validate the fidelity and diversity of the \textsc{FederatedFactory} synthesis, we perform a nearest-neighbor analysis across all evaluated domains. 

For each class, we randomly sample latent vectors $\mathbf{z} \sim \mathcal{N}(\mathbf{0}, \mathbf{I})$ to generate synthetic images via the localized EDM2 \cite{Karras2024EDM2} Factory. We then compute the pixel-wise Euclidean (L2) distance across the local training manifold to find each generated sample's closest real counterpart.

Figures \ref{fig:comparison_cifar}, \ref{fig:comparison_bloodmnist}, \ref{fig:comparison_pathmnist}, \ref{fig:comparison_retinamnist}, and \ref{fig:comparison_isic} present these comparisons. The top rows display the synthetic samples, while the bottom rows show their nearest real neighbors with the embedded L2 distances ($d$). Across both low-resolution standard benchmarks (\textsc{CIFAR-10}) and high-resolution medical modalities (\textsc{ISIC2019}), the generative priors successfully capture the underlying semantic morphologies and textures. Importantly, structural differences in poses, backgrounds, and boundaries between the synthetic images and their real matches confirm that the model synthesizes diverse data rather than memorizing the training set.

\section{Empirical Cumulative Distribution Function and t-SNE Analysis}
\label{app:ecdf}

We evaluate \textsc{FederatedFactory}'s manifold mapping by projecting generated ($\hat{\mathcal{D}}_{\mathrm{syn}}$) and local training samples ($\mathcal{D}_{\mathrm{union}}$) into 2D via t-SNE. As \cref{fig:tsne} shows, synthetic data consistently populates the same macroscopic regions as real data across all datasets. Crucially, despite this global semantic alignment, there is no point-to-point overlap. We further formalize the evaluation of inter-class fidelity and intra-class diversity through Empirical Cumulative Distribution Function (ECDF) curves and nearest-neighbor distance histograms (\cref{fig:generative_metrics}). 

\begin{figure}[t]
    \centering
    \includegraphics[width=\textwidth]{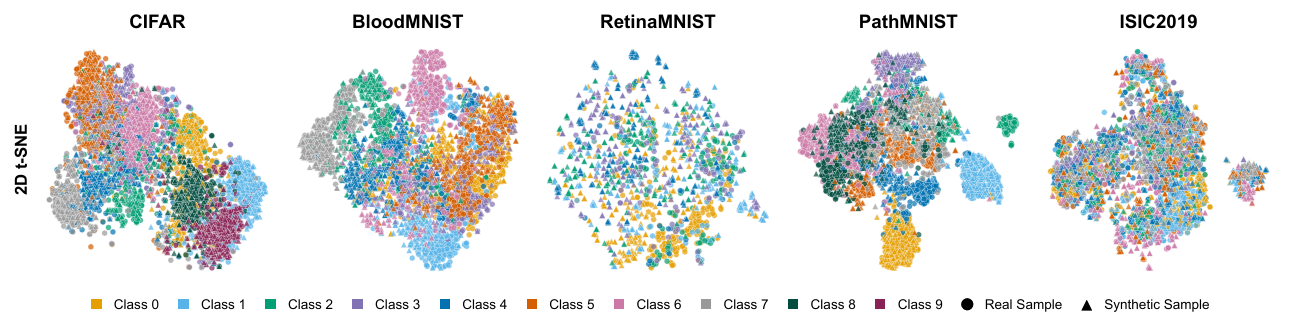}
    \caption{\textbf{Global Manifold Alignment in 2D t-SNE Subspace.} 
Feature space projections comparing true localized training ($\mathcal{D}_{\mathrm{union}}$, $\circ$) and synthesized global ($\hat{\mathcal{D}}_{\mathrm{syn}}$, $\triangle$) distributions across all five benchmarks.}
    \label{fig:tsne}
\end{figure}

\begin{figure}[htpb!]
    \centering
    \includegraphics[width=\textwidth]{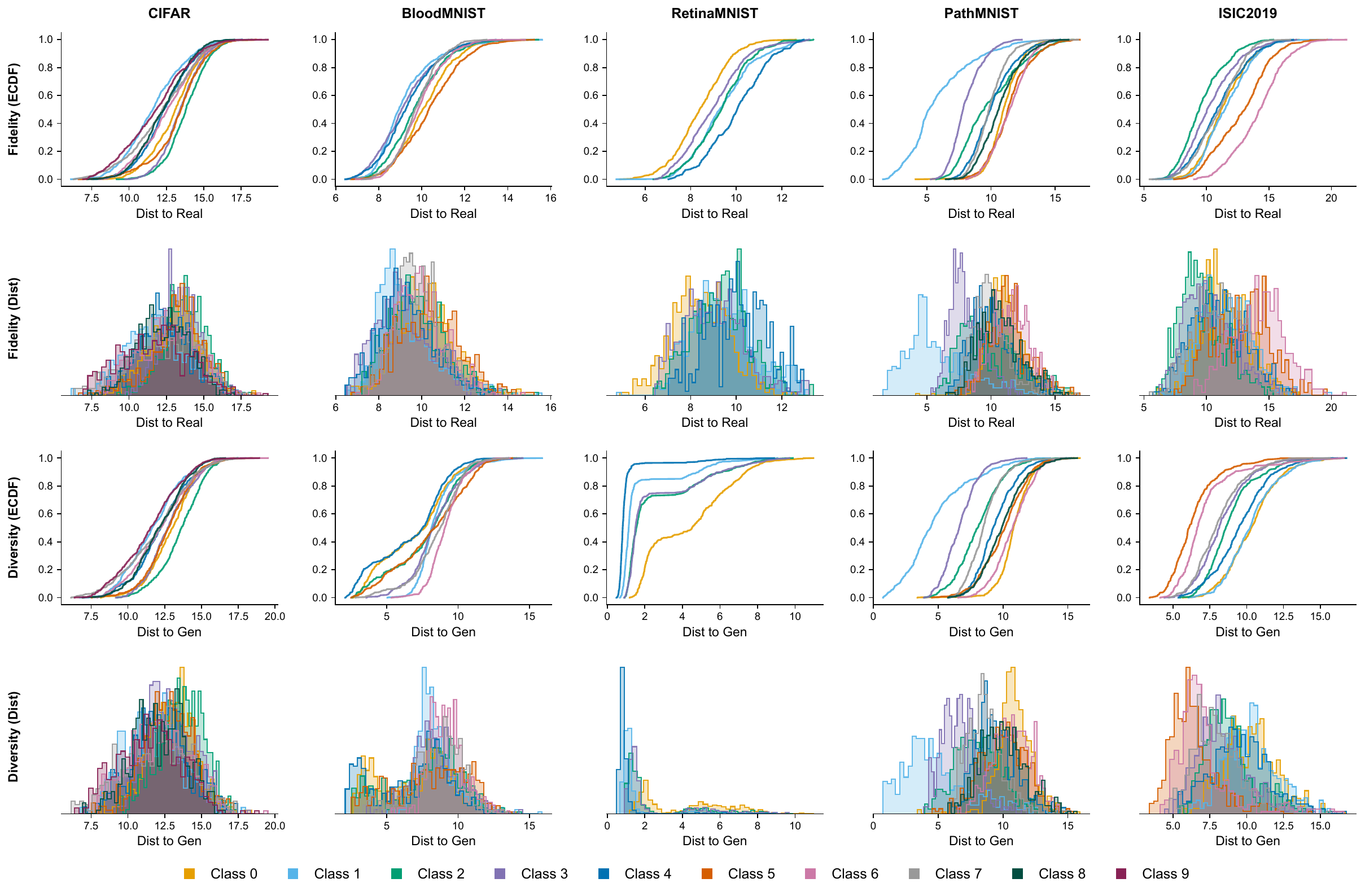}
    \caption{\textbf{Quantitative Manifold Alignment via ECDF and Density Histograms.} Evaluation of inter-class \textit{Fidelity} (top rows: L2 distance from synthetic samples to their nearest real neighbors) and intra-class \textit{Diversity} (bottom rows: L2 distance strictly among synthetic samples) across five benchmarks.}
    \label{fig:generative_metrics}
\end{figure}

\begin{itemize}
    \item \textbf{Fidelity (Distance to Real):} Quantified by the minimum Euclidean (L2) distance in the feature space from each synthetic sample to its absolute nearest real neighbor ($\hat{x} \to x$). 
    \item \textbf{Diversity (Distance to Generated):} Measured by the L2 distance strictly among the generated samples themselves ($\hat{x}_i \to \hat{x}_j$), useful to identify center collapse.
\end{itemize}

Steep Fidelity ECDF curves confirm that synthetic distributions tightly bind to the real data manifold with minimal Out-of-Distribution (OOD) deviation. Simultaneously, diversity ECDFs and density histograms shows how the generative factories maintain broad continuous support without mode collapse. These curves also reflect intrinsic domain variances: tightly grouped distributions in \textsc{RetinaMNIST} \cite{Yang2023MedMNIST} mirror the low structural variance of retinal crops, while broader ECDFs in \textsc{ISIC2019} \cite{Terrail2022FLamby} and \textsc{CIFAR} \cite{Krizhevsky2009CIFAR} capture their high visual heterogeneity. Ultimately, these metrics mathematically prove \textsc{FederatedFactory} captures the true underlying data support despite extreme single-class label skew.

\section{Additional FederatedFactory Performance Analysis}
\label{app:performance}

To illustrate the catastrophic failure of standard parameter aggregation \cite{McMahan2017FedAvg} and our framework's subsequent recovery, \cref{fig:federatedfactory_improvement_barplot} compares performance magnitudes across all five benchmarks. Under the pathological \textit{Single-Class Silo} regime, severe gradient interference causes baselines (FedAvg \cite{McMahan2017FedAvg}, FedDyn \cite{Acar2021FedDyn}, FedProx \cite{Li2020FedProx}, SCAFFOLD \cite{Karimireddy2020SCAFFOLD}) to collapse into near-random or majority-class predictions \cite{Yu2020GradientSurgery}. The barplot visualizes the $\Delta$ improvement (arrows) from the best iterative baseline to \textsc{FederatedFactory}. Ultimately, our zero-dependency approach completely bridges this gap, restoring predictive metrics to the theoretical data-pooled upper bound.

\begin{figure}[t]
    \centering
    \includegraphics[width=\textwidth]{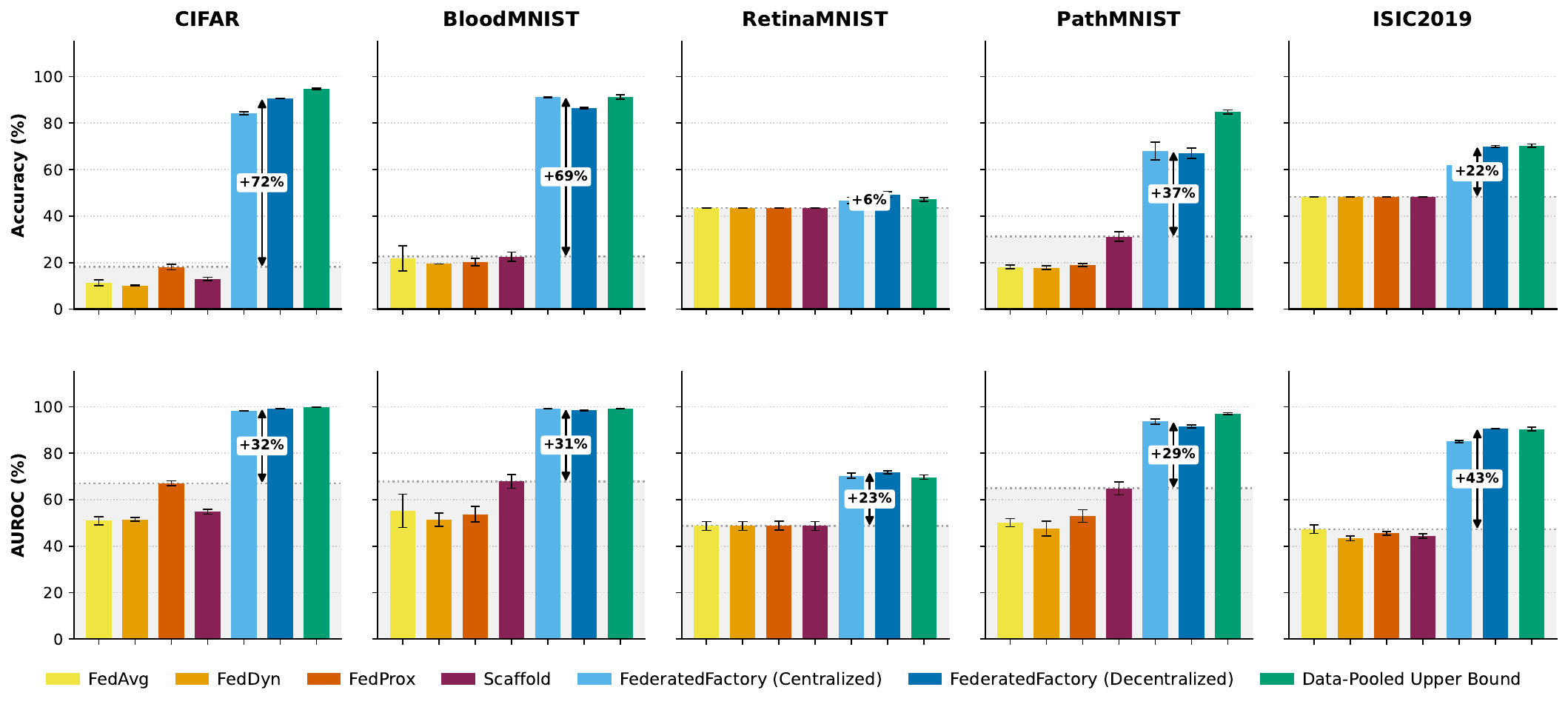}
    \caption{\textbf{Absolute Performance Recovery under Single-Class Silos.} Grouped bar chart detailing the accuracy and AUROC across the evaluated datasets. The gray shaded region represents the collapsed performance ceiling of standard iterative federated learning methods. \textsc{FederatedFactory} (in both \legendbox{colorFedFactC!20}{\textbf{Centralized}} and \legendbox{colorFedFactD!20}{\textbf{Decentralized}} modes) successfully escapes this collapsed regime, yielding improvements (e.g., matching the \legendbox{colorUpperBound!20}{\textbf{Centralized Upper Bound}}).}
    \label{fig:federatedfactory_improvement_barplot}
\end{figure}

\section{Additional FederatedFactory Cost Analysis}
\label{app:cost}

\textsc{FederatedFactory} transitions from a bandwidth-bound optimization regime to a compute-bound one in order to achieve higher model trustworthiness. We visualize this ``Compute-for-Bandwidth Swap'' via log-log scatter plot in \cref{fig:costs_plot}.

Standard FL \cite{McMahan2017FedAvg, Acar2021FedDyn, Li2020FedProx, Karimireddy2020SCAFFOLD} minimizes local compute ($\sim 10^{16}$--$10^{17}$ FLOPs) but requires massive iterative communication ($>10^5$ MB). Conversely, \textsc{FederatedFactory} employs a One-Shot FL (OSFL) protocol, trading a ``generative tax'' in local compute ($\sim 10^{18}$--$10^{20}$ FLOPs) for a $>99.4\%$ reduction in network payload ($\sim 10^3$ MB). This compute-for-bandwidth trade-off is ideal for data-siloed clinical consortiums possessing abundant local compute.

\begin{figure}[htpb!]
    \centering
    \includegraphics[width=\textwidth]{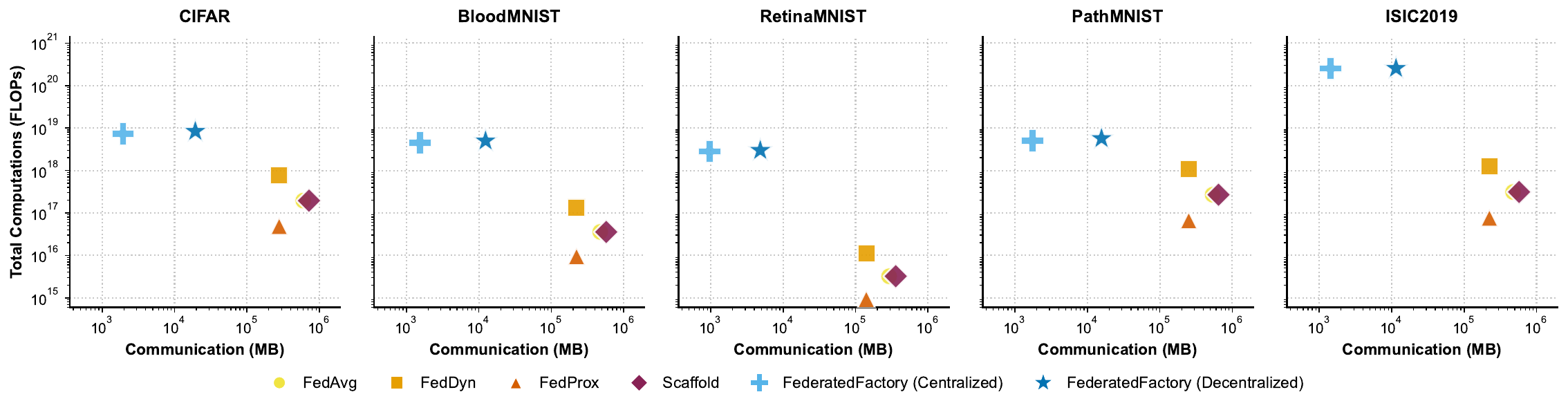}
\caption{\textbf{The Compute-for-Bandwidth Swap.} Log-log scatter comparing computational overhead versus network exposure. Iterative baselines (\legendbox{colorFedAvg!20}{\textbf{FedAvg $\circ$}}, \legendbox{colorFedDyn!20}{\textbf{FedDyn $\square$}}, \legendbox{colorFedProx!20}{\textbf{FedProx $\triangle$}}, \legendbox{colorScaffold!20}{\textbf{Scaffold $\diamond$}}) cluster bottom-right (low compute, massive bandwidth). Conversely, \textsc{FederatedFactory} (\legendbox{colorFedFactC!20}{\textbf{Centralized $+$}}, \legendbox{colorFedFactD!20}{\textbf{Decentralized $\star$}}) shifts to the top-left, accepting higher local FLOPs to achieve higher trustworthiness.}
\label{fig:costs_plot}
\end{figure}


\section{Image Generation Metrics}
\label{app:image_generation_metrics}
To quantify synthesized data quality, fidelity, and diversity, we evaluate generated ($\hat{\mathcal{D}}_{\mathrm{syn}}$) against target distributions ($\mathcal{D}_{\mathrm{union}}$) using Fréchet Inception Distance (FID) \cite{Heusel2017Gans} and Kernel Inception Distance (KID) \cite{Binkowski2018Demystifying}. These project images into the Inception-V3 activation space \cite{Szegedy2016Rethinking} to measure statistical divergence between real and synthetic manifolds, improving upon naive pixel-space distances. As \cref{tab:image_generation_metrics_table} and \cref{fig:image_generation_metrics} illustrates, per-class FID and KID distributions confirm \textsc{FederatedFactory} captures coherent semantic structures. On natural domains (\textsc{CIFAR-10} \cite{Krizhevsky2009CIFAR}), per-class FID averages $\sim$28.8, with KID reliably $<0.02$ ($\times 100$). Similarly, high-resolution dermatoscopic lesions (\textsc{ISIC2019} \cite{Terrail2022FLamby}) maintain strong manifold alignment with median FIDs $\sim$45 and tightly grouped KIDs. Benchmarks with complex, sparse cellular morphologies (\textsc{PathMNIST}, \textsc{BloodMNIST} \cite{Yang2023MedMNIST}) exhibit slightly higher variances.

\definecolor{colorCifar}{HTML}{004D40}
\definecolor{colorBlood}{HTML}{CC79A7}
\definecolor{colorRetina}{HTML}{E69F00}
\definecolor{colorPath}{HTML}{56B4E9}
\definecolor{colorIsic}{HTML}{8172B3}

\begin{table}[htpb]
    \centering
    \setlength{\tabcolsep}{4pt}
    \caption{\textbf{Per-Class Generative Alignment Metrics.} Quantitative evaluation of the synthesized images via FID and KID (scaled by $10^2$). Lower values indicate better manifold alignment with the real target distribution. Values are rounded to two decimal places for readability. For CIFAR, classes 0--9 correspond to airplane, automobile, bird, cat, deer, dog, frog, horse, ship, and truck, respectively.}
    \label{tab:class_wise_generation_metrics}
    \resizebox{\textwidth}{!}{%
    \begin{tabular}{ll cccccccccc}
        \toprule
        \multirow{2}{*}{\textbf{Dataset}} & \multirow{2}{*}{\textbf{Metric}} & \multicolumn{10}{c}{\textbf{Class}} \\
        \cmidrule(lr){3-12}
        & & \textbf{0} & \textbf{1} & \textbf{2} & \textbf{3} & \textbf{4} & \textbf{5} & \textbf{6} & \textbf{7} & \textbf{8} & \textbf{9} \\
        \midrule
        
        \multirow{2}{*}{\legendbox{colorCifar!20}{\textbf{CIFAR}}} 
        & \textbf{FID} ($\downarrow$) & 29.07 & 22.50 & 27.28 & 46.67 & 22.10 & 35.40 & 41.02 & 15.01 & 19.32 & 25.75 \\
        & \textbf{KID} ($\downarrow$) & 1.71 & 1.38 & 1.59 & 3.48 & 1.50 & 2.16 & 3.09 & 0.58 & 1.00 & 1.85 \\
        \midrule
        
        \multirow{2}{*}{\legendbox{colorBlood!20}{\textbf{BloodMNIST}}} 
        & \textbf{FID} ($\downarrow$) & 84.01 & 61.54 & 45.08 & 50.37 & 60.58 & 66.42 & 50.78 & 27.19 & -- & -- \\
        & \textbf{KID} ($\downarrow$) & 10.76 & 8.83 & 4.63 & 6.33 & 7.54 & 7.69 & 6.76 & 2.97 & -- & -- \\
        \midrule
        
        \multirow{2}{*}{\legendbox{colorRetina!20}{\textbf{RetinaMNIST}}} 
        & \textbf{FID} ($\downarrow$) & 38.49 & 58.40 & 38.77 & 44.09 & 58.88 & -- & -- & -- & -- & -- \\
        & \textbf{KID} ($\downarrow$) & 3.52 & 3.03 & 2.85 & 2.90 & 3.91 & -- & -- & -- & -- & -- \\
        \midrule
        
        \multirow{2}{*}{\legendbox{colorPath!20}{\textbf{PathMNIST}}} 
        & \textbf{FID} ($\downarrow$) & 38.20 & 33.85 & 77.25 & 103.85 & 57.24 & 64.21 & 64.68 & 79.69 & 70.86 & -- \\
        & \textbf{KID} ($\downarrow$) & 3.83 & 2.92 & 8.95 & 14.33 & 6.75 & 7.01 & 7.63 & 10.43 & 8.94 & -- \\
        \midrule
        
        \multirow{2}{*}{\legendbox{colorIsic!20}{\textbf{ISIC2019}}} 
        & \textbf{FID} ($\downarrow$) & 38.80 & 41.18 & 30.01 & 40.91 & 45.54 & 76.67 & 82.39 & 55.55 & -- & -- \\
        & \textbf{KID} ($\downarrow$) & 1.59 & 2.18 & 1.46 & 1.23 & 1.90 & 1.85 & 1.76 & 1.92 & -- & -- \\
        
        \bottomrule
    \end{tabular}%
    \label{tab:image_generation_metrics_table}
    }
\end{table}

\begin{figure}[htpb]
    \centering
    \includegraphics[width=\textwidth]{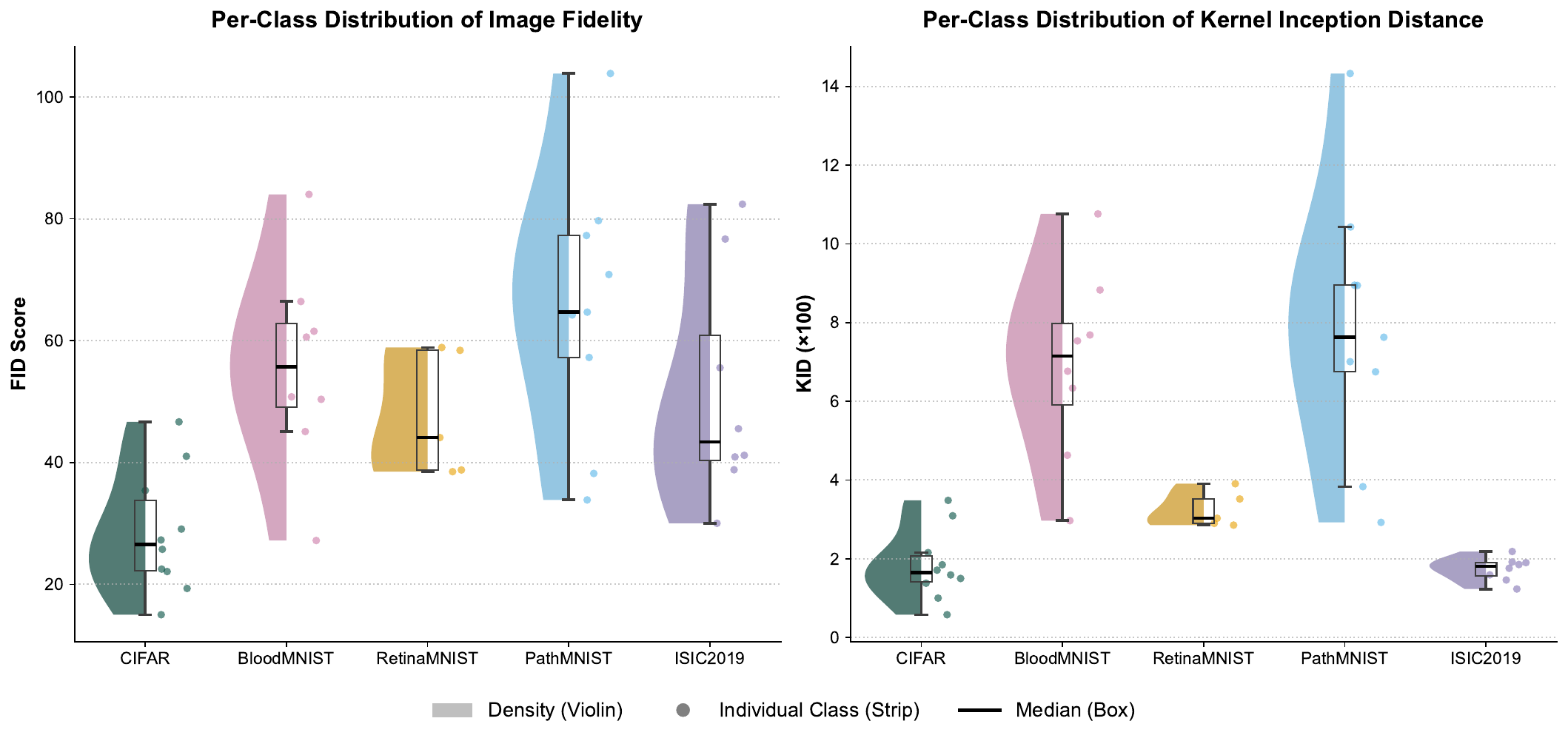}
    \caption{\textbf{Quantitative Manifold Alignment via FID and KID.} Per-class distributions of FID (left) and KID (right). The raincloud plots visualize the density, raw data points, and median scores.}
    \label{fig:image_generation_metrics}
\end{figure}

These quantitative metrics represent a strict lower bound on the framework's synthesis capability. To maintain computational tractability on distributed nodes, we deployed a heavily downscaled EDM2~\cite{Karras2024EDM2} architecture (128 embedding dimension, 32 sampling steps) and faced severe long-tail class imbalances (e.g., \textsc{ISIC2019}). Despite these sub-optimal conditions, \textsc{FederatedFactory} synthesizes counterfactuals matching centralized upper-bound classification accuracy (\cref{sec:experiments}). Because global discriminative performance is mathematically bounded by local generative error (\cref{theorem:convergence}), achieving robust downstream classification with non-ideal generators guarantees significant upside. As local compute scales, enabling larger foundation diffusion models and extended training, \textit{ex nihilo} synthesis fidelity will proportionally improve, yielding an even stronger global decision boundary.

\section{Computational Environments}
\label{app:computational_environments}

The core system is equipped with 2$\times$ NVIDIA H100 and 4$\times$ NVIDIA L40 GPUs.

\begin{itemize}
    \item \textbf{NVIDIA H100 (80GB VRAM):} The two H100 GPUs were exclusively dedicated to training the localized generative priors, specifically the EDM2 diffusion models \cite{Karras2022EDM, Karras2024EDM2}. 
    
    \item \textbf{NVIDIA L40 (48GB VRAM):} The four L40 GPUs were parallelized to handle server-side image synthesis and downstream evaluations. Specifically, they were utilized to synthesize the globally class-balanced datasets \textit{ex nihilo}, and for training the global discriminative classifiers, i.e. the adaptive ResNet-50 \cite{He2016ResNet} backbone.
\end{itemize}

\section{Datasets and Sources}
\label{app:datasets_and_sources}

We gratefully acknowledge the creators, curators, and institutions behind the public datasets used to evaluate \textsc{FederatedFactory}. The empirical validation of our method relies entirely on the public availability of these benchmarks (CIFAR-10 \cite{Krizhevsky2009CIFAR}, MedMNIST \cite{Yang2023MedMNIST}, ISIC2019 \cite{Terrail2022FLamby})

\begin{figure}[ht]
    \centering
    \includegraphics[width=\textwidth]{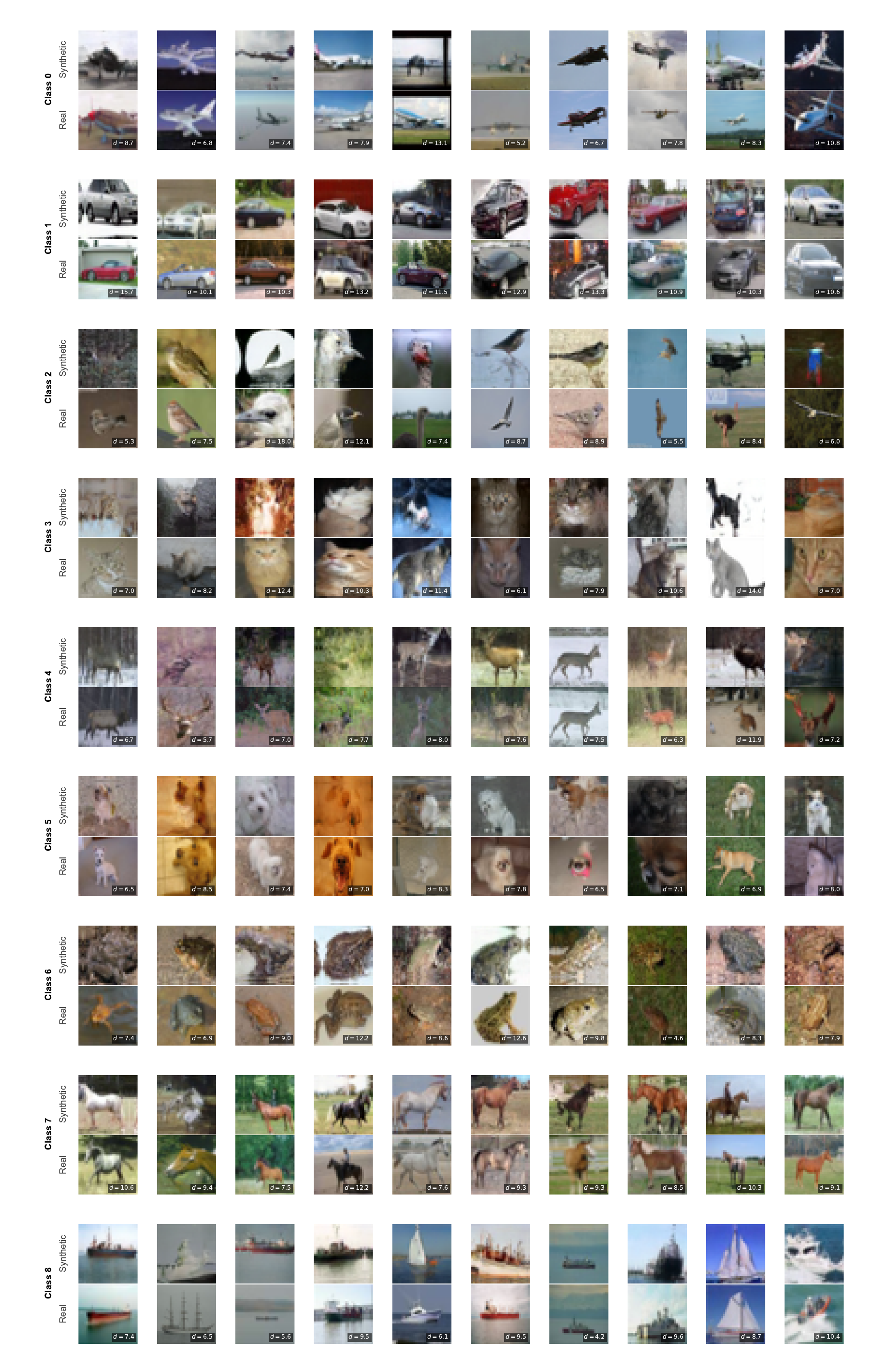}
    \caption{\textbf{Nearest Neighbor Analysis on CIFAR-10.} Synthetic samples (top rows) paired with their closest real counterparts (bottom rows). Class 9 has been omitted to accommodate layout constraints.}
    \label{fig:comparison_cifar}
\end{figure}

\begin{figure}[ht]
    \centering
    \includegraphics[width=\textwidth]{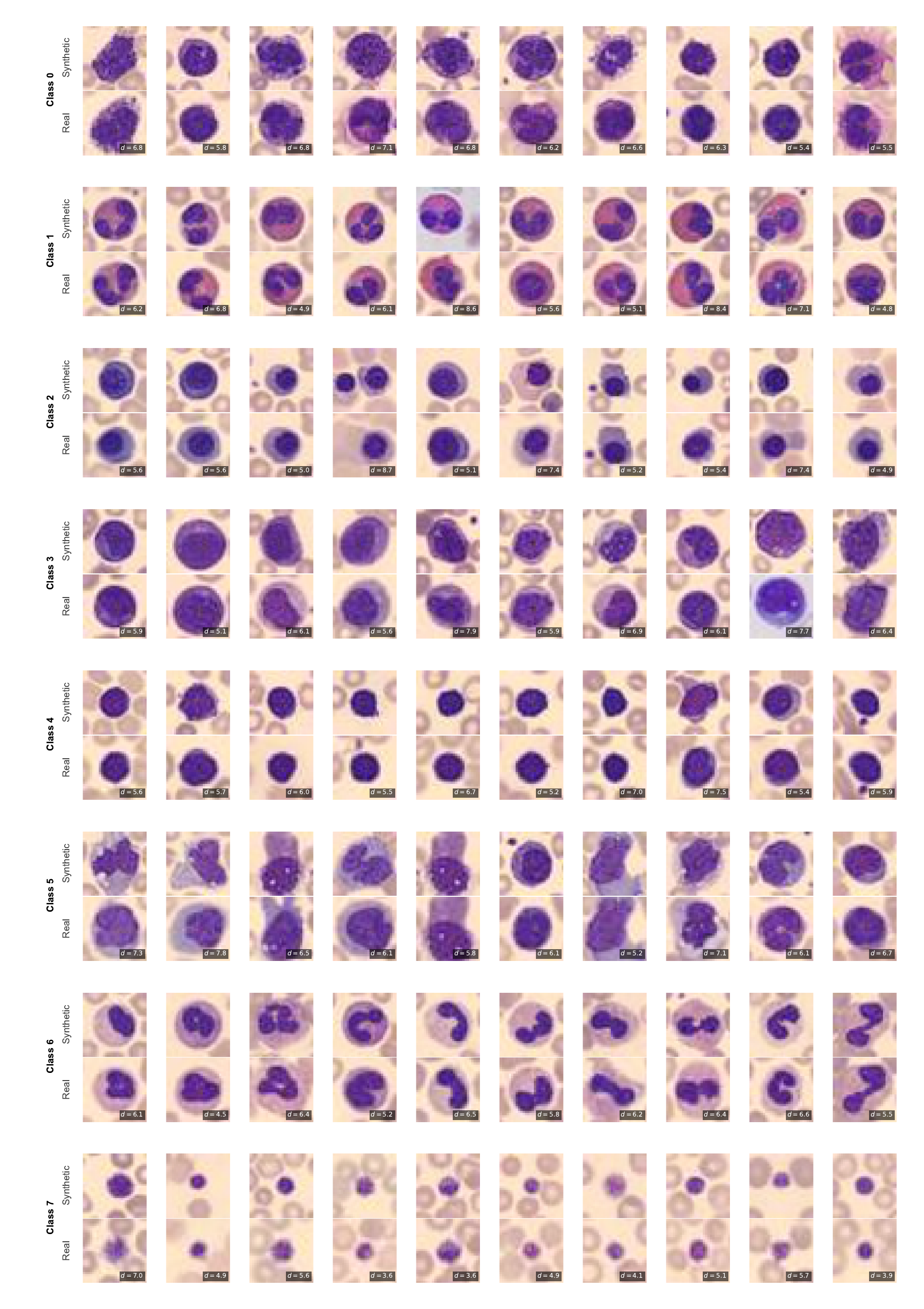}
    \caption{\textbf{Nearest Neighbor Analysis on BloodMNIST.} The localized Factories successfully capture the distinct morphological features, shapes, and textures of different blood cell types.}
    \label{fig:comparison_bloodmnist}
\end{figure}

\begin{figure}[ht]
    \centering
    \includegraphics[width=\textwidth]{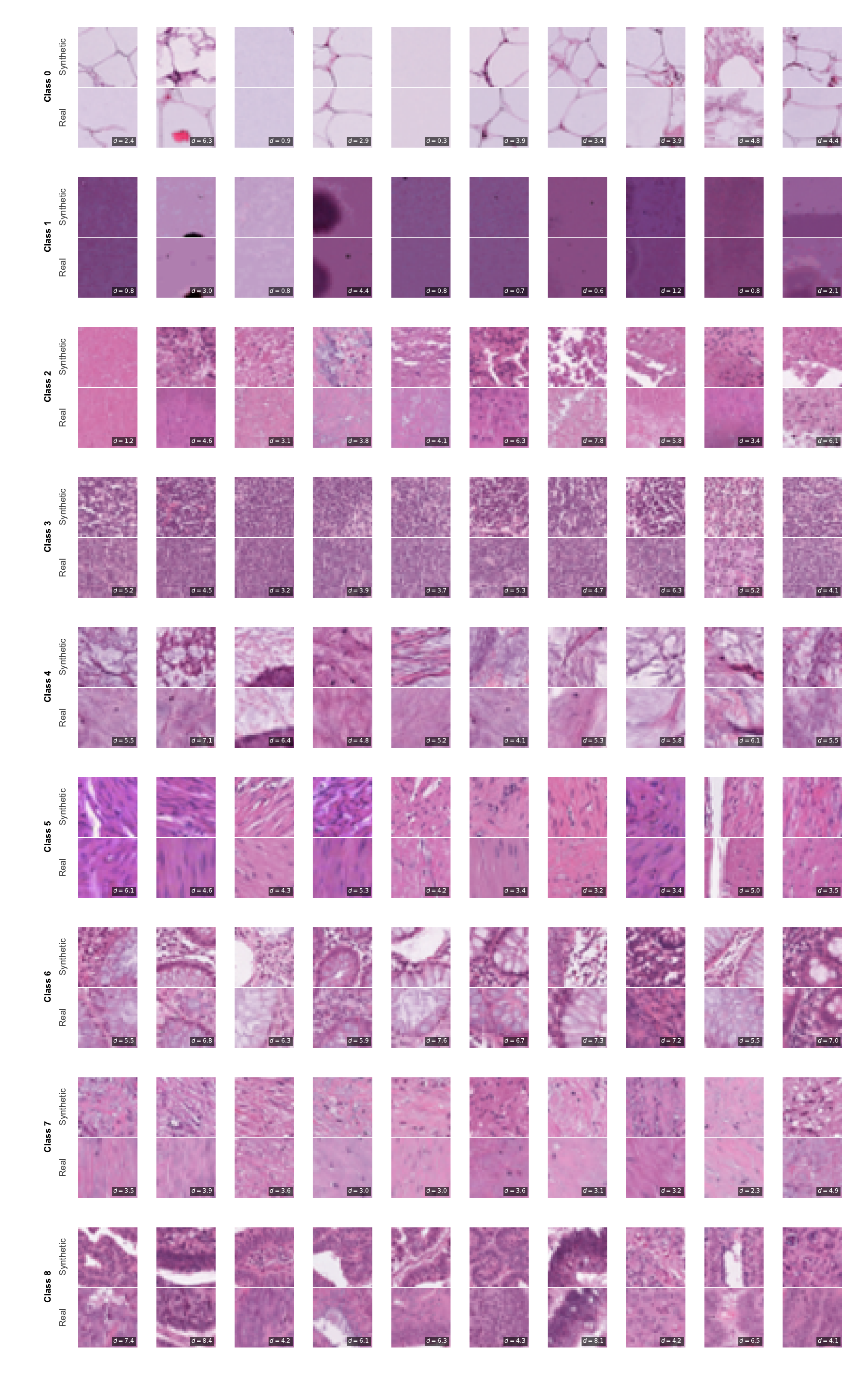}
    \caption{\textbf{Nearest Neighbor Analysis on PathMNIST.} Synthetic histological patches alongside their closest real training samples.}
    \label{fig:comparison_pathmnist}
\end{figure}

\begin{figure}[ht]
    \centering
    \includegraphics[width=\textwidth]{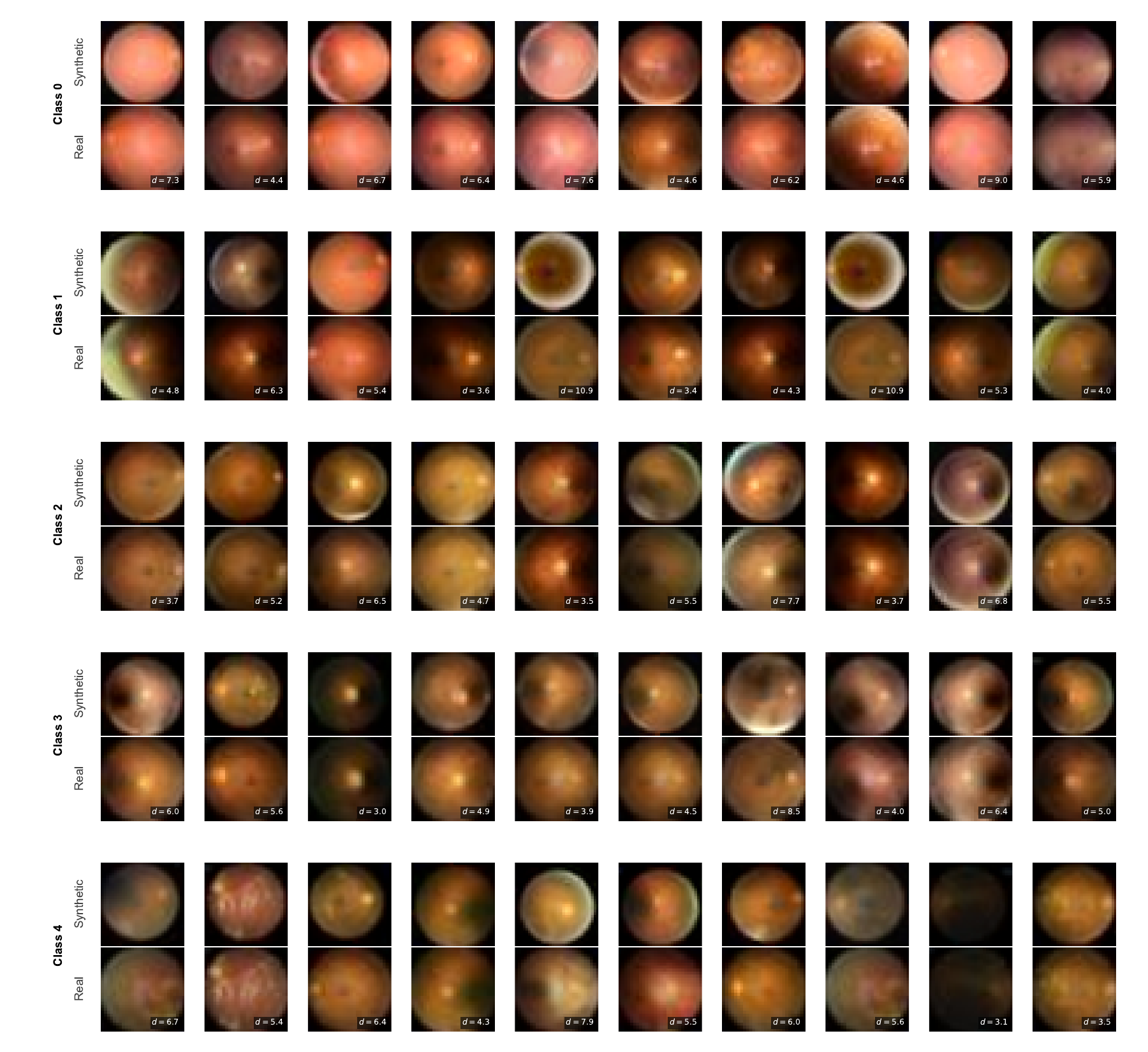}
    \caption{\textbf{Nearest Neighbor Analysis on RetinaMNIST.} Despite the intrinsically low structural variance of retinal fundus crops, the generative models maintain sufficient diversity and do not trivially memorize the limited training support.}
    \label{fig:comparison_retinamnist}
\end{figure}

\begin{figure}[ht]
    \centering
    \includegraphics[width=\textwidth]{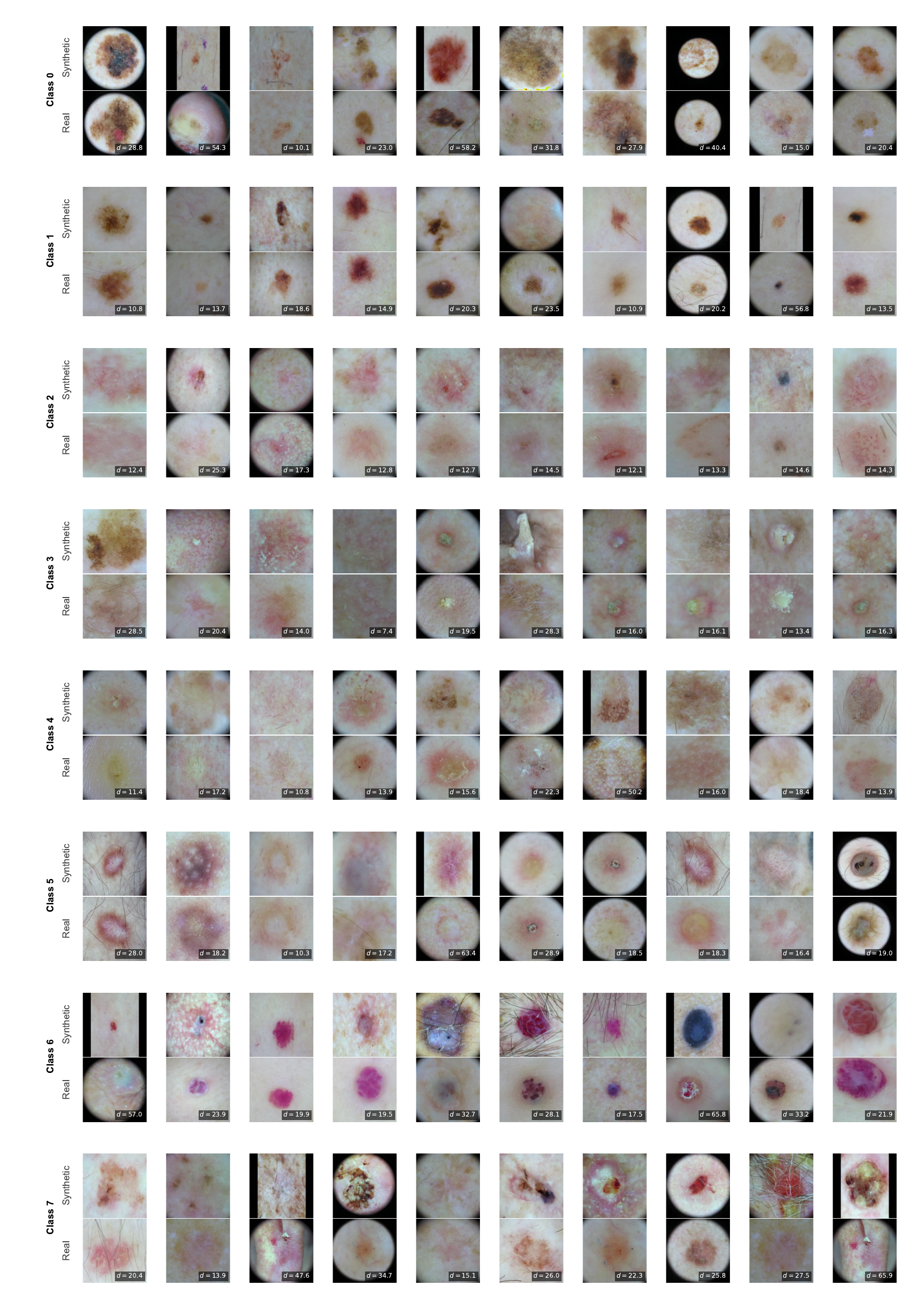}
    \caption{\textbf{Nearest Neighbor Analysis on ISIC2019.} High-resolution dermatoscopic lesions are synthesized with high fidelity.}
    \label{fig:comparison_isic}
\end{figure}

\end{document}